\newtheorem{definition}{Definition}
\newtheorem{theorem}{Theorem}
\newtheorem*{proof}{Proof}
\journal{Journal of \LaTeX\ Templates}
\begin{document}

\begin{frontmatter}



\title{Improving the Transferability of Adversarial Examples via Direction Tuning}

\author[mymainaddress]{Xiangyuan Yang}\ead{ouyang\_xy@stu.xjtu.edu.cn}
\author[mymainaddress]{Jie Lin\corref{mycorrespondingauthor}}
\cortext[mycorrespondingauthor]{Corresponding author}\ead{jielin@mail.xjtu.edu.cn}
\author[mythirdaddress]{Hanlin Zhang}\ead{hanlin@qdu.edu.cn}
\author[mymainaddress]{Xinyu Yang}\ead{yxyphd@mail.xjtu.edu.cn}
\author[mymainaddress]{Peng Zhao}\ead{p.zhao@mail.xjtu.edu.cn}

\address[mymainaddress]{School of Computer Science and Technology, Xi'an Jiaotong University, Xi'an, China}
\address[mythirdaddress]{Qingdao University, Qingdao, China}

\begin{abstract}
In the transfer-based adversarial attacks, adversarial examples are only generated by the surrogate models and achieve effective perturbation in the victim models.  Although considerable efforts have been developed on improving the transferability of adversarial examples generated by transfer-based adversarial attacks, our investigation found that, the big convergence speed deviation along the actual and steepest update directions of the current transfer-based adversarial attacks is caused by the large update step length, resulting in the generated adversarial examples can not converge well. However, directly reducing the update step length will lead to serious update oscillation so that the generated adversarial examples also can not achieve great transferability to the victim models. To address these issues, a novel transfer-based attack, namely direction tuning attack, is proposed to not only decrease the angle between the actual and steepest update directions in the large step length, but also mitigate the update oscillation in the small sampling step length, thereby making the generated adversarial examples converge well to achieve great transferability on victim models. Specifically, in our direction tuning attack, we first update the adversarial examples using the large step length, which is aligned with the previous transfer-based attacks. In addition, in each large step length of the update, multiple examples sampled by using a small step length. The average gradient of these sampled examples is then used to reduce the angle between the actual and steepest update directions, as well as mitigates the update oscillation by eliminating the oscillating component. By doing so, our direction tuning attack can achieve better convergence and enhance the transferability of the generated adversarial examples. In addition, a network pruning method is proposed to smooth the decision boundary, thereby further decreasing the update oscillation and enhancing the transferability of the generated adversarial examples. The experiment results on ImageNet demonstrate that the average attack success rate (ASR) of the adversarial examples generated by our method can be improved from 87.9\% to 94.5\% on five victim models without defenses, and from 69.1\% to 76.2\% on eight advanced defense methods, in comparison with that of latest gradient-based attacks. Code is available at https://github.com/HaloMoto/DirectionTuningAttack.
\end{abstract}



\begin{keyword}


Adversarial attack, Transferability, Direction tuning, Network pruning
\end{keyword}

\end{frontmatter}


%
\begin{figure}[ht]
\begin{center}
\centerline{\includegraphics[width=\columnwidth]{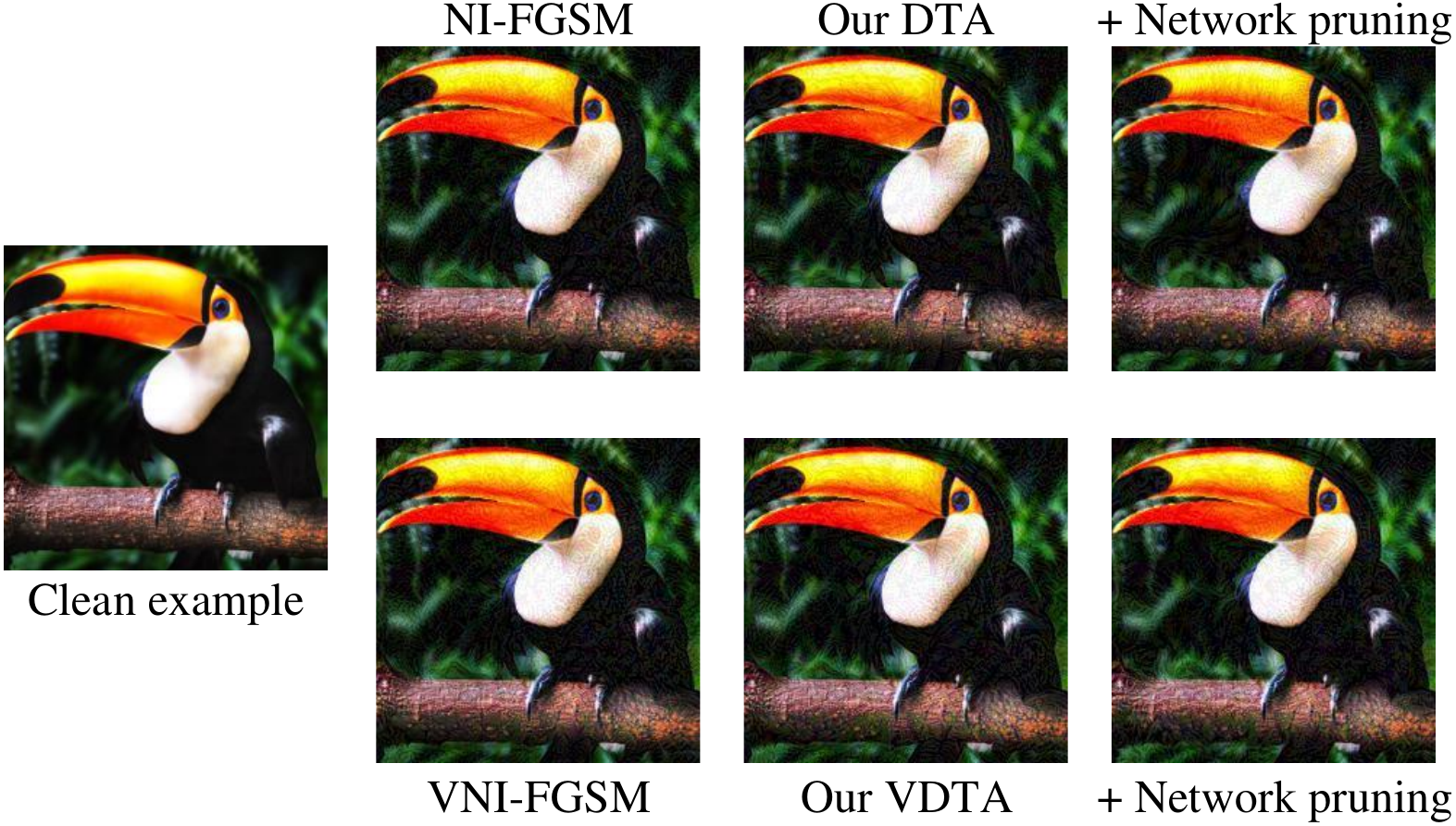}}
\caption{The comparison between the latest gradient-based attacks and our methods. The adversarial examples are generated by ResNet50~\cite{ResNet}. The fourth column denotes the adversarial examples generated by the combination of direction tuning attacks and network pruning method.}
\label{fig:adversarial_example_visualization}
\end{center}
\end{figure}
%

\section{Introduction}
\label{sec:introduction}
Due to the linear property~\cite{Intriguing-properties} of deep neural networks (DNN)~\cite{ResNet,VGG,Inception-v3}, the deep learning model is sensitive to unseen perturbation~\cite{CW,Madry-AT,FGSM,I-FGSM,MI-FGSM,SI-NI-FGSM,VMI-FGSM,Square,ZOO}, which seriously threatens the critical DNN-based applications such as self-driving technique and healthcare. The transfer-based adversarial attacks, taking advantage of generating adversarial examples without requiring any knowledge of the victim models,  have been widely developed to promote the robustness of the training models on deep neural networks, thereby increasing the reliability of the DNN-based applications.

Presently, a number of efforts on transfer-based adversarial attacks have been developed to focus on improving the transferability of generated adversarial examples to the victim models, because the greater the transferability, the higher the attack success rate on the victim model. The latest transfer-based attacks can be categorized as gradient-based attacks~\cite{FGSM,I-FGSM,MI-FGSM,SI-NI-FGSM,VMI-FGSM}, input transformation methods~\cite{SI-NI-FGSM,DI-FGSM,TI-FGSM,Admix}, feature importance-aware attacks~\cite{FIA,NAA}, model-specific attacks~\cite{SGM,LinBP} and model diversification methods~\cite{LGV,MoreBayesian}. Due to the latter four categories of attacks being developed based on gradient-based attacks, the investigation of gradient-based attacks is essential for enhancing the efficiency of transfer-based attacks.

In the current gradient-based attacks, Kurakin {\em et al.}~\cite{I-FGSM} first proposed the iterative-based fast gradient sign method (I-FGSM). Dong {\em et al.}~\cite{MI-FGSM} introduced momentum into I-FGSM and proposed  MI-FGSM to accelerate the convergence of generating adversarial examples. Based on MI-FGSM, Lin {\em et al.}~\cite{SI-NI-FGSM} introduced the looking ahead operation (NI-FGSM) to further speed up the convergence. Wang {\em et al.}~\cite{VMI-FGSM} proposed variance tuning to stabilize the update direction of MI-FGSM and NI-FGSM. However, we found that, the update step length of these attacks is large, leading to a big convergence speed deviation between the actual update direction and the steepest update direction, which will be verified by Theorem~\ref{theorem1} in our paper. In this scenario, the generated adversarial examples can not converge well to achieve great perturbation on the victim models. That is, the generated adversarial examples achieve poor transferability.

We also found that, directly reducing the step length will lead to serious update oscillation, resulting in the reduction of the transferability of adversarial examples generated by MI-FGSN~\cite{MI-FGSM} and NI-FGSM~\cite{SI-NI-FGSM}, which will be verified in Fig.~\ref{fig:different_eps_steps}. In addition, the update oscillation also has a negative effect on further improving the transferability of adversarial examples generated by VMI-FGSM~\cite{VMI-FGSM} and VNI-FGSM~\cite{VMI-FGSM} attacks.

To address these issues, we proposed a novel transfer-based adversarial attack, namely direction tuning attack, in which the small sampling step length is embedded into the large update step length to not only decrease the angle between the actual direction and the steepest direction, but also mitigate the update oscillation. Particularly, the adversarial example in our direction tuning attack is updated by using the large step length that aligns with the previous gradient-based attacks~\cite{I-FGSM,MI-FGSM,SI-NI-FGSM,VMI-FGSM}. Then, the several examples are sampled with a small step length, and the average gradient of these sampled examples is used as the update direction in each large step length of the update. Through the coordinated use of the large update step length and the small sampling step length, our direction tuning attack can effectively reduce the angle between the actual direction and the steepest direction, while simultaneously eliminating the oscillating component that arises during the optimization process. As a result, the adversarial examples generated by our method are able to converge efficiently and achieve high transferability to a variety of victim models. In addition, the network pruning method is proposed in our paper to smooth the decision boundary by pruning the redundant or unimportant neurons to further eliminate update oscillation for further enhancing the transferability of adversarial examples. Fig.~\ref{fig:adversarial_example_visualization} shows that the adversarial examples generated by our methods (i.e., direction tuning attacks with network pruning) are imperceptible by human eyes, which are similar to the latest gradient-based attacks~\cite{SI-NI-FGSM,VMI-FGSM}.

Our main contributions are summarized as follows:
\begin{itemize}
\item First, we theoretically found the large step length of the update will lead to a big convergence speed deviation along the actual update direction and the steepest update direction in adversarial examples generation. Moreover, we empirically found that directly reducing the step length of the update will lead to serious update oscillation and low transferability of the generated adversarial examples to the victim models as well. Therefore, this leads to contemplating how to utilize the small length of steps to enhance the transferability.

\item Second, direction tuning attack is proposed to utilize the small length of step to enhance the transferability, in which an inner loop sampling module is added in each update iteration and the adversarial examples are updated by the average gradient of the $K$ sampled examples rather than directly updated by the gradient of the start point of each iteration. By doing this, the angle between the actual and steepest update directions can be reduced, and the update oscillation can be mitigated, thereby enhancing the transferability of generated adversarial examples.


\item Third, the network pruning method is proposed to smooth the decision boundary, which can further mitigate the update oscillation and speed up the convergence during adversarial examples generation in the direction tuning attacks, thereby enhancing the transferability of generated adversarial examples. Particularly, to save computation time, the backward gradient is used to estimate the importance of all neurons through only the one-time feedforward of the network. Then, the low-importance neurons will be pruned to smooth the decision boundary of the network.

\item Finally, the experiments on ImageNet demonstrate that the adversarial examples generated by our methods can significantly improve the average attack success rate (ASR) from 87.9\% to 94.5\% on five victim models without defenses, and from 69.1\% to 76.2\% on eight advanced defense methods in comparison with the latest gradient-based attacks. Besides, when our direction tuning attack is integrated with various input transformations, feature importance-aware attacks, model-specific attacks and model diversification methods, the average ASR of generated adversarial examples can be improved from the best 95.4\% to 97.8\%.
\end{itemize}

The remaining content of this paper is organized as follows: Section~\ref{sec:related-work} introduces the related work and Section~\ref{sec:preliminaries} introduces the notations and basic idea of the previous studies. Section~\ref{sec:methodology} shows the introduction and analysis of our direction tuning attack and network pruning method. Section~\ref{sec:experiments} shows the experimental setup, the experiment results and their analysis. We conclude this paper in Section~\ref{sec:conclusion}.

\section{Related Work}
\label{sec:related-work}

\subsection{Transferable Adversarial Attacks}
\label{sec:transferable-attacks}
Transferable adversarial attacks can be divided into five categories, including gradient-based attacks, feature importance-aware attacks, input transformation methods, model-specific attacks, and model diversification methods. Specifically, the gradient-based attacks contain MI-FGSM~\cite{MI-FGSM}, NI-FGSM~\cite{SI-NI-FGSM}, VMI-FGSM~\cite{VMI-FGSM} and VNI-FGSM~\cite{VMI-FGSM}, which enhances the transferability of adversarial examples by stabilizing the gradient direction from the perspective of the optimization methods. 
Instead of destroying the features of the output layer and avoiding the overfit of the deep layer, Feature Importance-aware Attacks (FIA) such as FIA~\cite{FIA} and Neuron Attribution-based Attacks~\cite{NAA} destroy the intermediate features of DNN to ameliorate the transferability of adversarial examples. Input transformation methods include Diverse Input Method (DIM)~\cite{DI-FGSM}, Translation-Invariant Method (TIM)~\cite{TI-FGSM}, Scale-Invariant Method (SIM)~\cite{SI-NI-FGSM} and Admix~\cite{Admix}, which avoid the generated adversarial examples falling into the bad local optimum through data augmentation. Model-specific attacks include skip gradient method (SGM)~\cite{SGM} and linear propagation~\cite{LinBP}. SGM~\cite{SGM} used more gradient from the skip connections rather than the residual modules in ResNet-like models~\cite{ResNet} to craft adversarial examples with high transferability. Linear propagation~\cite{LinBP} omitted a part of the non-linear layer to perform backpropagation linearly. Model diversification methods include More Bayesian (MB)~\cite{MoreBayesian} and large geometric vicinity~\cite{LGV}, which finetune the pre-trained substitute model to collect diverse models. 

In brief, the five categories of transfer-based attacks enhance the transferability of the generated adversarial examples from different perspectives, i.e., optimization methods, effective intermediate feature, data augmentation, model architecture and model ensemble, respectively, which can be used in combination to further improve the transferability.

In addition, reverse adversarial perturbation (RAP)~\cite{RAP} firstly regarded the transfer-based attack as a min-max problem to find a flat minimum point. Self-Ensemble~\cite{Self-ensemble} and Token Refinement~\cite{Self-ensemble} proposed to enhance the adversarial transferability of vision transformers. Dynamic cues~\cite{Dynamic-Cues} are used to enhance the transferability from the vision transformer-based image model to the video model.

Unlike the existing efforts, the proposed direction tuning added an inner loop sampling module to sample $K$ examples with small steps, and the average gradient of these $K$ sampled examples is used to update the generated adversarial examples in each update iteration, which can effectively increase the degree of gradient alignment~\cite{gradient-alignment}, thereby enhancing the transferability of generated adversarial examples. Moreover, our method is a gradient-based attack, which can be combined with other types of transfer-based attacks such as various input transformations, feature importance-aware attacks (e.g., FIA~\cite{FIA}), model-specific attacks (e.g., SGM~\cite{SGM}) and model diversification methods (e.g., MB~\cite{MoreBayesian}) to further improve the transferability of the generated adversarial examples.


\subsection{Adversarial Defenses}
\label{sec:adversarial-defenses}
Adversarial training~\cite{Madry-AT,Fast-AT,Free-AT,Trades,Ensemble-Adversarial-Training,Confidence-Threshold-Reduction} is the most effective method of defense, which improves the robustness of DNN by optimizing the network parameters with the generated adversarial examples but decreases the natural accuracy a lot. Input processing, including input transformations~\cite{RP,Bit-Red,JPEG,RS} and input purifications~\cite{FD,NRP}, is a class of defense method without changing the parameters of DNN, which make the imperceptible perturbation invalid. To validate the strong transferability of our method against adversarial defenses, the advanced defenses are selected for evaluation in experiments.

\begin{table}
	\centering \caption{Notation}\label{Notation}
	\begin{tabular}{lp{0.7\textwidth}} \hline
		$x,y:$& A clean example and its ground truth label.\\
		$x^{adv}:$& The corresponding adversarial example of $x$.\\
		$f:$& The surrogate model.\\ 
		$\theta:$& The parameters of the surrogate model $f$.\\
		$f(\cdot;\theta):$& The logit vector outputed by $f$.\\
		$L(x,y;\theta):$& The loss function of the surrogate model $f$.\\
		$h:$& The victim model.\\
		$\epsilon:$& The attack strength.\\
		$\alpha:$& The step length.\\
		$g_t:$& The accumulated gradient at the $t$-th iteration.\\
		$x^{adv}_t:$& The generated adversarial example at $t$-th iteration.\\
		$\mu_1:$& The decay factor of $g_t$.\\
		$S^{\alpha}_x:$& The spherical step range, i.e., a ball with center $x$ and radius $\alpha$.\\
		$V^{\alpha}_x:$& The steepest convergence speed at $x$ within $S^{\alpha}_x$.\\
		$\nabla_xL(x,y;\theta):$& The gradient of the loss function w.r.t. $x$.\\
		$\|\nabla_xL(x,y;\theta)\|_2:$& The convergence speed along the gradient $\nabla_xL(x,y;\theta)$.\\
		$K:$& The number of examples sampled in each inner loop of our algorithms.\\
		$\frac{\alpha}{K}:$& The sampling step length in the inner loop of our algorithms.\\
		$x^{adv}_{t,k}:$& The $k$-th sampled example at the $t$-th iteration.\\
		$g_{t,k}:$& The accumulated gradient at the $k$-th iteration in the inner loop and $t$-th iteration in the outer loop of our algorithms.\\
		$\mu_2:$& The decay factor of $g_{t,k}$.\\
		\hline
	\end{tabular}
\end{table}

\section{Preliminaries}
\label{sec:preliminaries}
In this section, the main notations are introduced in Table~\ref{Notation} and the basic idea of latest transfer-based attacks is briefly introduced. 
\subsection{Definition of Adversarial Attack}
\label{sec:notation}
The adversarial attack is defined to generate an  adversarial example (denoted as $x^{adv}$) based on a clean example (denoted as $x$) subjecting to  $\| x-x^{adv}\|_p<\epsilon$, resulting in the adversarial example $x^{adv}$ can mislead the prediction of the classifier $f$, i.e., $f(x;\theta)\ne f(x^{adv};\theta)$. In addition, in our paper,  $f(x;\theta)$  represents the DNN-based classifier with parameters $\theta$ that outputs the logit output of $x$, $L(x,y;\theta)$  represents the loss function of the classifier $f$ (e.g., the cross-entropy loss), $\epsilon$ and $\|\cdot\|_p$  represent the attack strength and $p$-norm distance, respectively. We set $p=\infty$ in this paper to align with previous works.

\subsection{The Gradient-based Attacks}
\label{sec:family-ifgsm}
\textbf{FGSM}~\cite{FGSM} is the first effective gradient-based attack to maximize the loss $L(x^{adv},y;\theta)$ with a step update:
\begin{align}
\label{equation:fgsm}
x^{adv} = x+\epsilon\cdot sign\left(\nabla _xL\left(x,y;\theta\right)\right)
\end{align}
where $sign(\cdot)$ is the sign function and $\nabla_xL(x,y;\theta)$ is the gradient of the loss function w.r.t. $x$.

\textbf{I-FGSM}~\cite{I-FGSM} is an iterative version of FGSM, which further maximizes the loss function $L(x^{adv},y;\theta)$ by using FGSM with a small step in each step update:
\begin{align}
\label{equation:ifgsm}
x^{adv}_{t+1} = Clip^{\epsilon}_x\{x^{adv}_t + \alpha\cdot sign(\nabla_{x^{adv}_{t}}L(x^{adv}_{t},y;\theta))\}
\end{align}
where $x^{adv}_t$ is the generated adversarial example at the $t$-th iteration (note that $x^{adv}_0=x$), $\alpha$ denotes the step length and $Clip^{\epsilon}_x(\cdot)$ function restricts the generated adversarial examples to be within the $\epsilon$-ball of $x$.

\textbf{MI-FGSM}~\cite{MI-FGSM} involves the momentum into I-FGSM~\cite{I-FGSM} to avoid getting stuck into the bad local optimum in adversarial example generation:
\begin{gather}
g_{t+1} = \mu_1 \cdot g_t + \frac{\nabla_{x^{adv}_t}L(x^{adv}_t,y;\theta)}{\|\nabla_{x^{adv}_t}L(x^{adv}_t,y;\theta)\|_1} \label{equation:mifgsm-gradient}, \\
x^{adv}_{t+1} = Clip^{\epsilon}_x\{x^{adv}_t+\alpha\cdot sign(g_{t+1})\} \label{equation:mifgsm-update}
\end{gather}
where $g_t$ is the accumulated gradient at the $t$-th iteration (note that $g_0=0$) and $\mu_1$ is the decay factor of $g_t$.

\textbf{NI-FGSM}~\cite{SI-NI-FGSM} adds the looking ahead operation into MI-FGSM~\cite{MI-FGSM} to further avoid the adversarial example $x^{adv}$ falling into the bad local optimum. Specifically, NI-FGSM firstly calculates the  looking ahead adversarial example $x^{nes}_t$ with  $g_t$ before calculation of the gradient $g_{t+1}$, which can be represented as 
\begin{align}
\label{equation:looking-ahead}
x^{nes}_t = x^{adv}_t+\alpha\cdot \mu_1 \cdot g_t
\end{align}
Note that $\nabla_{x^{adv}_t}L(x^{adv}_t,y;\theta)$ in Eq.~\ref{equation:mifgsm-gradient} is replaced with $\nabla_{x^{adv}_t}L(x^{nes}_t,y;\theta)$.

\textbf{VMI-FGSM}~\cite{VMI-FGSM} and \textbf{VNI-FGSM}~\cite{VMI-FGSM} add variance tuning into MI-FGSM~\cite{MI-FGSM} and NI-FGSM~\cite{SI-NI-FGSM} respectively to further stabilize the gradient direction for reducing update oscillation. Concretely, in VMI-FGSM, the calculation of gradient $g_{t+1}$ is changed by adding the variance $v_t$:
\begin{gather}
g_{t+1} = \mu_1 \cdot g_t + \frac{\nabla_{x^{adv}_t}L(x^{adv}_t,y;\theta)+v_t}{\|\nabla_{x^{adv}_t}L(x^{adv}_t,y;\theta)+v_t\|_1} \label{equation:vmifgsm-gradient}, \\
v_{t+1} = \frac{\sum\nolimits_{i=1}^N{\nabla _{x^{adv}_{t,i}}L(x^{adv}_{t,i},y;\theta)}}{N} - \nabla_{x^{adv}_t}L(x^{adv}_t,y;\theta) \label{equation:vmifgsm-variance}
\end{gather}
where $N$ denotes the number of examples, $v_0=0$, $v_{t}$ denotes the gradient variance at the $t$-th iteration, $x^{adv}_{t,i}=x^{adv}_t+r_i$, $r_i\sim U[-(\beta\cdot\epsilon)^d,(\beta\cdot\epsilon)^d]$, and $U[a^d,b^d]$ stands for the uniform distribution in $d$ dimensions and $\beta$ is a hyperparameter. Note that, for VNI-FGSM, $\nabla_{x^{adv}_t}L(x^{adv}_t,y;\theta)$ in Eq.~\ref{equation:vmifgsm-gradient} is replaced with $\nabla_{x^{adv}_t}L(x^{nes}_t,y;\theta)$.

\subsection{Feature Importance-aware Attack (FIA)}
\label{sec:fia}
Due to the overfitting in the last layers of DNN, FIA~\cite{FIA} improves the transferability of generated adversarial examples by destroying the intermediate features. In fact, FIA can be considered as a gradient-based attack as well,  in which only the loss function is replaced by:
\begin{align}
\label{equation:fia-loss}
L_{FIA}(x,y;\theta)=\sum(\Delta\odot f_l(x;\theta))
\end{align}
where $\Delta$ denotes the importance matrix, $f_l(x;\theta)$ is the feature maps from the $l$-th layer and $\odot$ denotes the element-wise product. It is noteworthy that FIA generates the adversarial examples by minimizing Eq.~\ref{equation:fia-loss}.

\begin{figure}[t]
\begin{center}
\centerline{\includegraphics[width=\columnwidth]{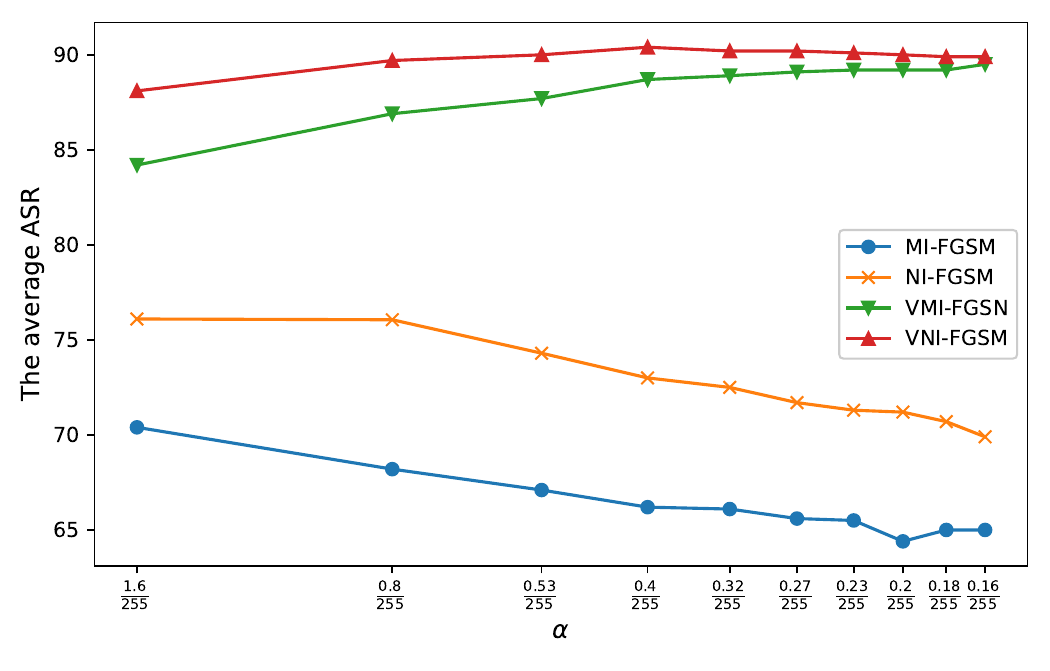}}
\caption{The average attack success rate (ASR) on five models for ImageNet under the black-box setting when reducing the step length $\alpha$ gradually. The adversarial examples are generated by ResNet50~\cite{ResNet}. Note that, under the same attack strength $\epsilon$ constraint, to effective attack, the smaller the attack step length $\alpha$, the larger the number of the step updates $T$.}
\label{fig:different_eps_steps}
\end{center}
\end{figure}

\section{Our Approach}
\label{sec:methodology}
In this section, the motivation of our paper is mentioned first. Then, according to the motivation, our methods including direction tuning attacks and network pruning method are described in details.

\subsection{Motivation}
\label{sec:motivation}
In the white-box attacks, the adversarial examples usually can converge well by setting smaller step lengths $\alpha$, e.g., CW~\cite{CW} and PGD-1000~\cite{Madry-AT} attacks. However, in the transfer-based attacks~\cite{MI-FGSM,SI-NI-FGSM}, the adversarial examples generated with the small step length $\alpha$ will get stuck in the bad local optimum due to the update oscillation occurred by the small step length $\alpha$. For example, in Fig.~\ref{fig:different_eps_steps},  as the attack step length $\alpha$ reduces, the attack success rate of MI-FGSM~\cite{MI-FGSM} and NI-FGSM~\cite{SI-NI-FGSM} decreases gradually, which means the update oscillation leads to poor transferability of the generated adversarial examples. Although variance tuning~\cite{VMI-FGSM} can stabilize the update direction to avoid generated adversarial examples falling into the bad local optimum, the attack success rate of VMI-FGSM~\cite{VMI-FGSM} and VNI-FGSM~\cite{VMI-FGSM} only slightly increases first and then remain unchanged or slightly decreased as the attack step length $\alpha$ reduces because of the update oscillation occurred as well by the small step length $\alpha$.

Hence, the current transfer-based attacks~\cite{MI-FGSM,SI-NI-FGSM,VMI-FGSM} need a large step length of the update to achieve great convergence of adversarial example generation. For example, in the transfer-based attacks, e.g., MI-FGSM~\cite{MI-FGSM} and NI-FGSM~\cite{SI-NI-FGSM}, as shown in Fig.~\ref{fig:different_eps_steps}, the step length $\alpha$ is usually set as $\epsilon/10$ to achieve high attack success rate, which is larger than $\epsilon/1000$ in PGD-1000~\cite{Madry-AT}. 

However, a large update step length $\alpha$ will lead to a big convergence speed deviation along the actual and steepest update directions, which is proved by Theorem~\ref{theorem1} in our paper. According to Theorem~\ref{theorem1}, when the step length $\alpha$ is increased from $\alpha_1$ to $\alpha_2$, the convergence speed deviation along the actual gradient direction and the steepest direction of $x^{adv}_t$ at $t$-th iteration will be increased as well, i.e., $V^{\alpha_2}_{x^{adv}_t}-\|\nabla _{x^{adv}_t}L(x^{adv}_t,y;\theta)\|_2\geq V^{\alpha_1}_{x^{adv}_t}-\|\nabla _{x^{adv}_t}L(x^{adv}_t,y;\theta)\|_2$, which leads to the generated adversarial examples cannot converge well, thereby resulting in low transferability of generated adversarial examples to victim models. This leads to contemplating how to utilize the small length of steps to enhance the transferability of generated adversarial examples.

%
\begin{definition}[The steepest convergence speed]
\label{definition1}
Due to the continuity of DNN, the steepest convergence speed of $L(x,y;\theta)$ at $x$ within the spherical step range $S^{\alpha}_x=[x-\alpha, x+\alpha]$ can be represented as $V^{\alpha}_x$:
\begin{align}
\label{equation:steepest-speed}
V^{\alpha}_x = \max \left\{ \frac{\left| L\left( x,y;\theta \right) -L\left( x_1,y;\theta \right) \right|}{\| x-x_1 \|_2} \mid x_1\in S^{\alpha}_x \right\}
\end{align}
where the spherical step range $S^{\alpha}_x$ indicates a ball with center $x$ and radius $\alpha$.
\end{definition}
\begin{definition}[The convergence speed deviation]
\label{definition2}
The convergence speed deviation is defined as the difference between the convergence speed along the steepest and actual gradients at $x$ within the spherical step range $S^{\alpha}_x$, i.e., $V_x^{\alpha}-\|\nabla _xL(x,y;\theta)\|_2$.
\end{definition}
\begin{theorem}
\label{theorem1}
The convergence speed deviation $V^{\alpha}_x-\|\nabla _xL(x,y;\theta)\|_2$ satisfied: when $\alpha_2>\alpha_1$, $V^{\alpha_2}_x-\|\nabla _xL(x,y;\theta)\|_2\geq V^{\alpha_1}_x-\|\nabla _xL(x,y;\theta)\|_2\geq 0$.
\end{theorem}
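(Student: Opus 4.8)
The plan is to observe that, after cancelling the common term $\|\nabla_x L(x,y;\theta)\|_2$ from both sides, the inequality chain decomposes into two independent claims: (i) the monotonicity $V^{\alpha_2}_x \geq V^{\alpha_1}_x$ whenever $\alpha_2 > \alpha_1$, and (ii) the lower bound $V^{\alpha_1}_x \geq \|\nabla_x L(x,y;\theta)\|_2$ for every $\alpha_1 > 0$. I would establish these two claims separately, since together they reproduce exactly the stated chain $V^{\alpha_2}_x-\|\nabla _xL\|_2\geq V^{\alpha_1}_x-\|\nabla _xL\|_2\geq 0$.

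For claim (i), the key observation is that the defining feasible set is nested: since $\alpha_2 > \alpha_1$ we have $S^{\alpha_1}_x \subseteq S^{\alpha_2}_x$. Because $V^\alpha_x$ is by Definition~\ref{definition1} the maximum of the difference quotient $|L(x,y;\theta) - L(x_1,y;\theta)| / \|x - x_1\|_2$ taken over $x_1 \in S^\alpha_x$, enlarging the feasible set can only enlarge (or leave unchanged) the maximum. Hence $V^{\alpha_2}_x \geq V^{\alpha_1}_x$, which is the first inequality.

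For claim (ii), I would exploit the differentiability of $L$ at $x$. Writing $g = \nabla_x L(x,y;\theta)$ and treating the case $g \neq 0$ (the case $g = 0$ being trivial, as the right-hand side is then zero while $V^\alpha_x \geq 0$ holds by definition), I would probe the feasible set along the gradient direction. Setting $x_1 = x + h\,g/\|g\|_2$ for small $h > 0$, the first-order Taylor expansion gives $L(x_1,y;\theta) - L(x,y;\theta) = h\|g\|_2 + o(h)$, so that $\|x-x_1\|_2 = h$ and the corresponding difference quotient equals $\|g\|_2 + o(1)$, tending to $\|g\|_2$ as $h \to 0^+$. Since every such $x_1$ lies in $S^{\alpha_1}_x$ for $h$ small enough, the maximum $V^{\alpha_1}_x$ dominates the supremum of these quotients, which in turn is at least their limit $\|g\|_2$. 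This yields $V^{\alpha_1}_x \geq \|g\|_2$, i.e. the second inequality $V^{\alpha_1}_x - \|\nabla_x L\|_2 \geq 0$.

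I expect the main obstacle to be the rigorous treatment of claim (ii): the quantity $V^\alpha_x$ is written as a $\max$, yet the value $\|\nabla_x L\|_2$ is only approached in the limit $x_1 \to x$ along the gradient direction rather than attained at a fixed $x_1$, so I must read the $\max$ as a supremum (or invoke enough regularity to guarantee attainment) and argue that this supremum is bounded below by the directional-derivative limit. A secondary point is that the argument needs $L$ to be \emph{differentiable} at $x$, slightly stronger than the mere continuity invoked in Definition~\ref{definition1}; I would make this regularity assumption explicit at the outset.
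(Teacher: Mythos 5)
Your proposal is correct and follows essentially the same route as the paper: both arguments obtain the first inequality from the nesting $S^{\alpha_1}_x\subseteq S^{\alpha_2}_x$ of the feasible sets in Definition~\ref{definition1}, and both obtain the second by identifying the small-radius behaviour of $V^{\alpha}_x$ with $\|\nabla_xL(x,y;\theta)\|_2$. The one substantive difference is in how that second step is executed: the paper introduces an auxiliary radius $\alpha_0\rightarrow 0$ and asserts the two-sided approximation $V^{\alpha_0}_x\approx\|\nabla_xL(x,y;\theta)\|_2$ without justification, then sandwiches; you instead prove directly the one-sided bound $V^{\alpha_1}_x\geq\|\nabla_xL(x,y;\theta)\|_2$ by probing along the gradient direction and taking $h\rightarrow 0^+$, which is all the theorem actually requires and is the more rigorous rendition. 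Your closing caveats --- that the $\max$ must be read as a supremum and that differentiability (not mere continuity) of $L$ at $x$ is needed --- are both apt and apply equally to the paper's own argument.
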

\begin{proof}
Assuming that $\alpha_1>\alpha_0$ and $\alpha_0\rightarrow 0$, so
\begin{align}
& \alpha_2>\alpha_1>\alpha_0,\,\, V^{\alpha_0}_x\approx \|\nabla_xL(x,y;\theta)\|_2 \nonumber \\
&\Rightarrow S^{\alpha_0}_x\subset S^{\alpha_1}_x\subset S^{\alpha_2}_x 
\Rightarrow V^{\alpha_2}_x\geq V^{\alpha_1}_x\geq V^{\alpha_0}_x \nonumber \\
&\Rightarrow V^{\alpha_2}_x-\|\nabla _xL(x,y;\theta)\|_2\approx V^{\alpha_2}_x-V^{\alpha_0}_x \nonumber \\
&\geq V^{\alpha_1}_x-V^{\alpha_0}_x\approx V^{\alpha_1}_x-\|\nabla _xL(x,y;\theta)\|_2\geq 0 \nonumber \\
&\Rightarrow V^{\alpha_2}_x-\|\nabla _xL(x,y;\theta)\|_2\geq V^{\alpha_1}_x-\|\nabla _xL(x,y;\theta)\|_2\geq 0 \nonumber
\end{align}
\end{proof}

To address the above issue without changing the step length $\alpha$ in the general gradient-based transferable attacks~\cite{I-FGSM,MI-FGSM,SI-NI-FGSM,VMI-FGSM}, in our paper, direction tuning attack embeds the small length of sampling step (i.e., $\frac{\alpha}{K}$) in each update iteration to sample $K$ examples and replace the original gradient with the average gradient of the $K$ sampled examples, which decreases the angle between the update gradient direction and the steepest direction (i.e., increases the gradient alignment~\cite{gradient-alignment}) and mitigate the update oscillation. In addition, the network pruning method is proposed as well to further mitigate the update oscillation, thereby promoting the generated adversarial examples converge well. In the following, the proposed direction tuning attacks and network pruning method are described in detail, respectively.


%
\begin{algorithm}[tb]
	\caption{DTA}
	\label{alg:dtm}
	\begin{algorithmic}[1]
		\Require
		The natural example $x$ with its ground truth label $y$; the surrogate model $f$; the loss function $L$; The magnitude of perturbation $\epsilon$; the number of iterations $T$; the decay factor $\mu_1$;
		\Require
		The number of iterations $K$ and decay factor $\mu_2$ in the inner loop of our \textit{direction tuning}.
		\Ensure
		An adversarial example $x^{adv}$.
		\State $\alpha=\epsilon/T$
		\State $g_0=0; x^{adv}_0=x$
		\For {$t=0\rightarrow T-1$}
		\State $g_{t,0}=g_t;x^{adv}_{t,0}=x^{adv}_t$
		\For {$k=0\rightarrow K-1$}
		\State Get $x^{nes}_{t,k}$ by $x^{nes}_{t,k}=x^{adv}_{t,k}+\alpha\cdot\mu_1\cdot g_{t,k}$ (i.e., Eq.~\ref{equation:looking-ahead-dta})
		\State Get the gradient $g_{t,k+1}$ by Eq.~\ref{equation:dta-inner-gradient} 
		\State Update $x^{adv}_{t,k+1}$ by Eq.~\ref{equation:dta-update}
		\EndFor
		\State Update $g_{t+1}$ by Eq.~\ref{equation:dta-gradient}
		\State Update $x^{adv}_{t+1}$ by Eq.~\ref{equation:mifgsm-update}
		\EndFor
		\State $x^{adv}=x^{adv}_T$\\
		\Return $x^{adv}$
	\end{algorithmic}
\end{algorithm}
\begin{figure}[t]
	\begin{center}
		\centerline{\includegraphics[width=\columnwidth]{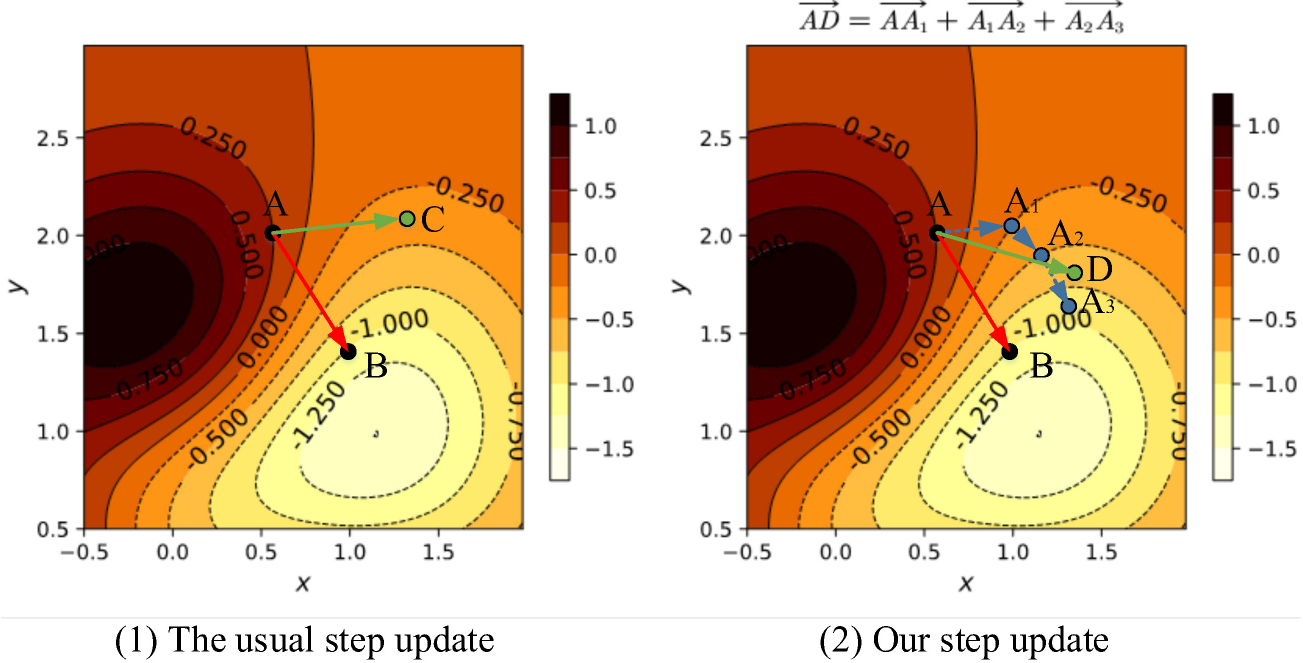}}
		\caption{A comparison between the usual step update and our step update in a function of two variables. In the sub-figures (1) and (2), i.e. the contour line of the function, the red arrow $\protect\overrightarrow{AB}$ indicates the steepest direction, the green arrows $\protect\overrightarrow{AC}$ and $\protect\overrightarrow{AD}$ indicate the actual directions of the usual step update and our step update, respectively. Specifically, the arrow $\protect\overrightarrow{AD}$ is achieved by combining the gradient vector $\protect\overrightarrow{AA_1}$ of point $A$ with two gradient vectors (i.e., $\protect\overrightarrow{A_1A_2}$ and $\protect\overrightarrow{A_2A_3}$) of the sampling points (i.e., $A_1$ and $A_2$, which are achieved with the small sampling step length). Note that the direction of vector $\protect\overrightarrow{AC}$ is the same as that of vector $\protect\overrightarrow{AA_1}$.}
		\label{fig:a_example_of_direction_tuning}
	\end{center}
\end{figure}

\subsection{Direction Tuning Attack}
\label{sec:direction-tuning}
\subsubsection{Basic Idea of Direction Tuning Attack}
\label{sec:direction-tuning-intro}
According to the analysis in Section~\ref{sec:motivation}, in the current transfer-based attacks, the large step length $\alpha$ leads to the big deviation between the convergence speeds of $L(x,y;\theta)$ along the actual update direction and the steepest update direction. To reduce the deviation, \textit{direction tuning attack} (DTA) is proposed to add an inner loop in each step update, i.e., embedding multiple small sampling step lengths into each large step length of the update. Particularly, the small step length is used to sample multiple examples in each update. Then the average gradient of the sampled examples is used for the outer adversarial example update, which can decrease the angle between the actual and steepest update directions and mitigate the update oscillation.

Concretely, in our direction tuning attack, assuming that the number of iterations is $K$ in the inner loop, the gradient at $t$-th step update can be calculated by:
\begin{gather}
g_{t+1} = \mu_1\cdot g_t + \frac{\sum\nolimits_{k=1}^{K}{g_{t,k}}}{K} \label{equation:dta-gradient},
\end{gather}
where $\mu_1\cdot g_t$ is the momentum, the $\frac{\sum\nolimits_{k=1}^{K}{g_{t,k}}}{K}$ is the average gradient of $K$ sampled examples, which are generated with a small step length, i.e. $\frac{\alpha}{K}$. In Eq.~\ref{equation:dta-gradient}, the gradient of each sampled example is calculated by:
\begin{align}
\label{equation:looking-ahead-dta}
x^{nes}_{t,k} = x^{adv}_{t,k}+\alpha\cdot \mu_1 \cdot g_{t,k},
\end{align}
\begin{gather}
\left\{ \begin{array}{l}
g_{t,k+1} = \mu_2\cdot g_{t,k} + \frac{\nabla_{x^{adv}_{t,k}}L(x^{nes}_{t,k},y;\theta)}{\|\nabla_{x^{adv}_{t,k}}L(x^{nes}_{t,k},y;\theta)\|_1} \label{equation:dta-inner-gradient}\\
g_{t,0}=g_t \\
\end{array} \right.,
\end{gather}
where $\mu_2$ is the decay factor of $g_{t,k}$ in the inner loop and $x^{adv}_{t,k}$ denotes the $k$-th sampled example. Eq.~\ref{equation:looking-ahead-dta} indicates the looking ahead operation~\cite{SI-NI-FGSM}, which is applied in the inner loop to select an example having a more accurate gradient direction. In Eq.~\ref{equation:dta-inner-gradient}, each example is sampled by:
\begin{gather}
\left\{ \begin{array}{l}
x^{adv}_{t,k+1} = Clip^{\epsilon}_x\{x_{t,k}^{adv}+\frac{\alpha}{K}\cdot sign(g_{t,k+1})\} \label{equation:dta-update}\\
x^{adv}_{t,0} = x^{adv}_t \\
\end{array} \right.
\end{gather}
where $\frac{\alpha}{K}$ is the small step length in the inner loop. 

Note that, Eq.~\ref{equation:mifgsm-gradient}, \ref{equation:mifgsm-update} and \ref{equation:looking-ahead} in MI-FGSM~\cite{MI-FGSM} and NI-FGSM~\cite{SI-NI-FGSM} are used to update adversarial examples in each iteration with the step length as $\alpha$. However, Eq.~\ref{equation:looking-ahead-dta}, \ref{equation:dta-inner-gradient} and \ref{equation:dta-update} in the proposed DTA are used to sample $K$ examples with the step length as $\frac{\alpha}{K}$. Then, the adversarial examples in our DTA are updated by the average gradient of the $K$ sampled examples in each iteration, rather than updated directly by the gradient of the start point of each iteration.

Actually, the existing efforts~\cite{gradient-alignment} have been proven that the transferability of the generated adversarial examples can be enhanced through increasing the degree of gradient alignment, in which the degree of gradient alignment is defined as the cosine of the angle between the actual and steepest update directions. Our proposed direction tuning attack (DTA) samples $K$ examples in each update iteration, and the average gradient of these sampled examples can reduce the angle between the actual and steepest update directions, i.e., increasing the degree of gradient alignment. The proposed direction tuning attack (DTA) is shown in Algorithm 1.
 Specifically, firstly, all parameters are initialized (Lines 1$\sim$2). Then, for each iteration, the inner loop samples $K$ examples and calculates their gradients, and the outer loop updates the adversarial example with the average gradient of these sampled examples (Lines 5$\sim$10). Finally, the adversarial example is generated at the $T$-th iteration. Particularly, in the inner loop sampling (Lines 5$\sim$10), $K$ examples are sampled with a small step length $\frac{\alpha}{K}$. 

Fig.~\ref{fig:a_example_of_direction_tuning} shows an intuitive instance of an update of our direction tuning attack, in the usual step update shown in Fig.~\ref{fig:a_example_of_direction_tuning}-(1), the actual update direction $\protect\overrightarrow{AC}$ seriously deviates from the steepest update direction $\protect\overrightarrow{AB}$, which leads to poor and slow convergence. In our step update shown in Fig.~\ref{fig:a_example_of_direction_tuning}-(2), the gradient vectors of two sampled examples, i.e. $\protect\overrightarrow{A_1A_2}$ and $\protect\overrightarrow{A_2A_3}$, are used to produce a better update direction (i.e., $\overrightarrow{AD}$) to decrease the angle between $\protect\overrightarrow{AC}$ and $\protect\overrightarrow{AB}$, i.e. $\left<\protect\overrightarrow{AD}, \protect\overrightarrow{AB}\right><\left<\protect\overrightarrow{AC}, \protect\overrightarrow{AB}\right>$ where $\left<\cdot, \cdot\right>$ denotes the angle between two directions. The average gradient $\overrightarrow{AD}$ can reduce the update oscillation of $\protect\overrightarrow{AA_1}$, $\protect\overrightarrow{A_1A_2}$ and $\protect\overrightarrow{A_2A_3}$, which are the gradient of the sampled examples.

To achieve great comprehensive performance, our direction tuning attack can be updated by integrating with the variance tuning~\cite{VMI-FGSM}, namely variance tuning-based direction tuning attack (VDTA), in which the gradient calculation formula in VDTA is changed as:
\begin{gather}
\left\{ \begin{array}{l}
g_{t,k} = \mu_2\cdot g_{t,k-1} + \frac{\nabla_{x^{adv}_{t,k-1}}L(x^{nes}_{t,k-1},y;\theta)+v_{t,k-1}}{\|\nabla_{x^{adv}_{t,k-1}}L(x^{nes}_{t,k-1},y;\theta)+v_{t,k-1}\|_1} \label{equation:vdta-inner-gradient}\\
g_{t,0} = g_t \\
\end{array} \right.,
\end{gather}
where $v_{t,k-1}$ denotes the gradient variances of the $(k-1)$-th sampled example $x^{adv}_{t,k-1}$ in the inner loop of the $t$-th step update, which is calculated by:
\begin{gather}
\left\{ \begin{array}{l}
v_{t,k-1} = \frac{\sum\nolimits_{i=1}^N{\nabla _{x^{adv}_{t,k-2,i}}L(x^{adv}_{t,k-2,i},y;\theta)}}{N} - \nabla_{x^{adv}_{t,k-2}}L(x^{adv}_{t,k-2},y;\theta) \label{equation:vdta-variance-tuning}\\
v_{t,0}=v_t \\
x^{adv}_{t,k-2,i}=x^{adv}_{t,k-2}+r_i\\
\end{array} \right.,
\end{gather}
where $v_t$ is calculated by Eq.~\ref{equation:vmifgsm-variance}, and $r_i$ is a mean distributed white noise, i.e. $r_i\sim U[-(\beta\cdot\epsilon)^d,(\beta\cdot\epsilon)^d]$. 

Note that, VMI-FGSM~\cite{VMI-FGSM} updates adversarial examples in each iteration with the step length as $\alpha$ using Eq.~\ref{equation:mifgsm-update}, \ref{equation:vmifgsm-gradient} and \ref{equation:vmifgsm-variance}, but the proposed VDTA uses Eq.~\ref{equation:dta-update}, \ref{equation:vdta-inner-gradient} and \ref{equation:vdta-variance-tuning} to sample $K$ examples with the step length as $\frac{\alpha}{K}$. The pseudocode of VDTA is shown in \ref{appendix:pseudocode-vdta}.

\begin{figure}[t]
\begin{center}
\centerline{\includegraphics[width=\columnwidth]{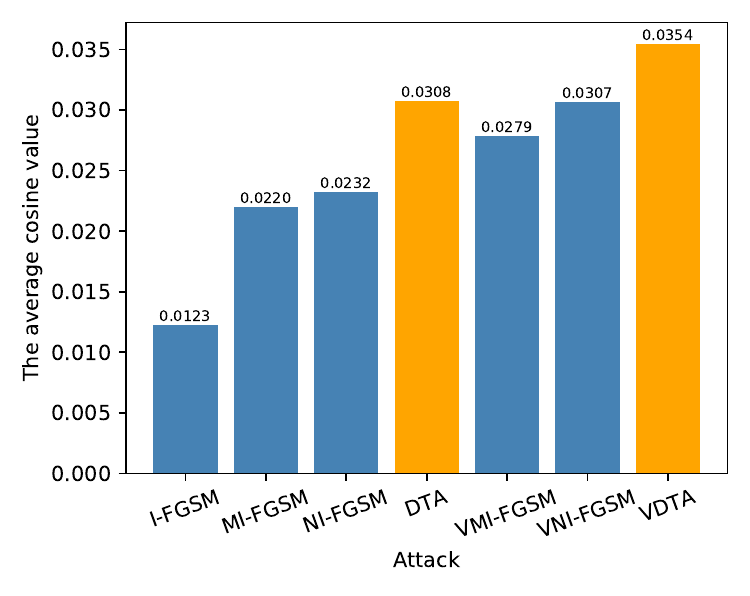}}
\caption{The average cosine value of the angle between the actual and steepest update directions among different attacks on ImageNet~\cite{ImageNet}. The surrogate model is ResNet50~\cite{ResNet} and the victim model is VGG16~\cite{VGG}.}
\label{fig:cosine_curve}
\end{center}
\end{figure}

\subsubsection{Analysis of Direction Tuning Attack}
\label{sec:dta-analysis}
To analyze the effect of our direction tuning attack on reducing the angle between the actual and steepest update directions, the cosine value of the angle between the actual and steepest gradient directions among different attacks on ImageNet~\cite{ImageNet} is compared in Fig.~\ref{fig:cosine_curve}. In a transfer-based attack, the actual gradient is generated by the attack on the surrogate model, and the steepest gradient is approximately the descent gradient on the victim model to the input. This is reasonable because gradient alignment~\cite{gradient-alignment} indicated that the more aligned the gradient generated by both the surrogate and victim models, the higher the transferable attack success rate. As shown in Fig.~\ref{fig:cosine_curve}, the average cosine value of the angle between the actual and steepest gradient directions (i.e., $\nu$) is calculated by:
\begin{gather}
\label{equation:average-cosine-value}
\nu = \sum_{i=1}^M{\sum_{t=1}^T{\cos \left( g_{i,t}^{f},g_{i,t}^{h} \right)}}
\end{gather}
where $M$ is the size of the test set, $T$ denotes the number of iterations, $g_{i,t}^f$ denotes the actual gradient of the $i$-th example at $t$-th step update generated by an attack on the surrogate model $f$, and $g_{i,t}^h$ denotes the corresponding steepest gradient generated by gradient descent method on the victim model $h$.

Fig.~\ref{fig:cosine_curve} shows that our DTA can significantly reduce the angle between the actual and steepest update directions when compared with I-FGSM and its variants. When variance tuning~\cite{VMI-FGSM} is integrated into our direction tuning attack (i.e., VDTA achieved), the cosine value of VDTA is also significantly higher than that of other attacks. In addition, the higher the cosine value of the angle between the actual and steepest update directions, the greater the transferable attack success rate, which will be verified in Section~\ref{sec:experiments}.

In addition to the accurate update direction property as described above, our attack also has the oscillation reduction property. That is our attack can also eliminate the update oscillation when the step update length is small, thereby enhancing the ability of the generated adversarial example to get out of the bad local optimum. The accurate update direction and oscillation reduction properties of our direction tuning attack are analyzed mathematically as follows.


The current transfer-based attacks update $x^{adv}_t$ by adding $\alpha\cdot sign(g_{t,1})$, i.e., 
\begin{gather}
\label{equation:usual-step-update}
x^{adv}_{t+1} = x^{adv}_t + \alpha\cdot sign(g_{t,1}),
\end{gather}
where $g_{t,1}$ is the gradient calculated by the attack method at the point $x^{adv}_t$ on the surrogate model. In fact, reducing the step update length can make the generated adversarial example convergence better that is why the attack success rate of I-FGSM~\cite{I-FGSM} is higher than FGSM~\cite{FGSM}. Hence, when the update step length $\alpha$ is divided into multiple small step lengths, the adversarial example $x^{adv}_{t+1}$ is updated by
%
\begin{gather}
\label{equation:smaller-step-update}
x^{adv}_{t+1} = x^{adv}_t + \frac{\alpha}{K}\cdot [sign(g_{t,1})+sign(g_{t,2})+\cdots+sign(g_{t,K})],
\end{gather}
where $\frac{\alpha}{K}$ denotes the small step length, and $g_{t,i}$ is the gradient calculated by the attack method at the point $[x^{adv}_t+\frac{\alpha}{K}\cdot sign(g_{t,1})+\frac{\alpha}{K}\cdot sign(g_{t,2})+\cdots+\frac{\alpha}{K}\cdot sign(g_{t,i-1})]$ on the surrogate model. However, $\frac{\alpha}{K}$ may be too small to promote the generated adversarial example jump out of the bad local optimum due to the update oscillation. To solve the issue, in our attack, the Eq.~\ref{equation:smaller-step-update} is changed by:
\begin{gather}
\label{equation:direction-tuning-step-update}
x^{adv}_{t+1} = x^{adv}_t + \alpha\cdot sign(\frac{g_{t,1}+g_{t,2}+\cdots+g_{t,K}}{K}).
\end{gather}
In most cases, $\alpha\cdot sign(\frac{g_{t,1}+g_{t,2}+\cdots+g_{t,K}}{K})$ in Eq.~\ref{equation:direction-tuning-step-update} has a smaller angle with $\frac{\alpha}{K}\cdot (sign(g_{t,1})+sign(g_{t,2})+\cdots+sign(g_{t,K}))$ in Eq.~\ref{equation:smaller-step-update} in comparison with $\alpha\cdot sign(g_{t,1})$ in Eq.~\ref{equation:usual-step-update}, i.e.,
\begin{gather}
\cos(\alpha\cdot sign(\frac{g_{t,1}+g_{t,2}+\cdots+g_{t,K}}{K})\nonumber\\
, \frac{\alpha}{K}\cdot (sign(g_{t,1})+sign(g_{t,2})+\cdots+sign(g_{t,K})))\geq \nonumber\\
\cos(\alpha\cdot sign(g_{t,1}), \frac{\alpha}{K}\cdot (sign(g_{t,1})+sign(g_{t,2})+\cdots+sign(g_{t,K})))\label{equation:accurate-update-direction-property}.
\end{gather}
%
Moreover, in most cases, the updates in Eqs.~\ref{equation:direction-tuning-step-update} and \ref{equation:usual-step-update} have the same moving distance along their respective update directions, but the moving distance of update in Eq.~\ref{equation:smaller-step-update} becomes small because of the existence of update oscillation (i.e., the directions of the gradients $g_{t,1}, g_{t,2}, \cdots, g_{t,K}$ have angles greater than 0 to each other), i.e.,
\begin{gather}
\left\| \alpha\cdot sign(\frac{g_{t,1}+g_{t,2}+\cdots+g_{t,K}}{K})\right\|_2=\left\| \alpha\cdot sign(g_{t,1})\right\|_2\geq \nonumber\\
\left\| \frac{\alpha}{K}\cdot (sign(g_{t,1})+sign(g_{t,2})+\cdots+sign(g_{t,K}))\right\|_2 \label{equation:oscillation-reduction-property}.
\end{gather}
Before the updation in Eq.~\ref{equation:direction-tuning-step-update}, $sign(\frac{g_{t,1}+g_{t,2}+\cdots+g_{t,K}}{K})$ eliminates the oscillating effects of these gradients, i.e. the components of $g_{t,1}, g_{t,2}, \cdots, g_{t,K}$ perpendicular to $(g_{t,1}+g_{t,2}+\cdots+g_{t,K})$ cancel each other out, which verifies the oscillation reduction property of Eq.\ref{equation:direction-tuning-step-update}.

Therefore, according to Eq.~\ref{equation:accurate-update-direction-property} and \ref{equation:oscillation-reduction-property}, the update equation (i.e., Eq.~\ref{equation:direction-tuning-step-update}) in our attack simultaneously have the accurate update direction, which is similar to the update equation with the small update step length shown in Eq.~\ref{equation:smaller-step-update} and enough moving distance by mitigating oscillation. 


\subsection{Network Pruning Method}
\label{sec:network-pruning}
\subsubsection{Basic idea of Network Pruning Method}
\label{sec:network-pruning-intro}
Network pruning~\cite{Network-pruning} can compress the neural network while maintaining the network performance. Recently, network pruning is used to improve the robustness of DNN~\cite{SAP}. Moreover, our study also found that network pruning can be utilized to enhance the transferability of the adversarial examples to attack DNN. 

Hence, the network pruning method (NP) is proposed in this paper to enhance the transferability of adversarial examples. To be specific, in our network pruning method, the redundant or unimportant neurons in DNN will be pruned while keeping the correct classification for the input, thereby smoothing the decision boundary. We can also understand it as focusing on destroying the important features extracted by the remaining neurons. To evaluate the importance of the neurons, the evaluation metrics $\boldsymbol{I}_{i,j}$ is introduced as the absolute variation of the loss value after the neuron is pruned, which can be represented as
\begin{align}
\label{equation:importance-metric}
\boldsymbol{I}_{i,j} = \left|L(x,y;\theta\backslash f_{i,j})-L(x,y;\theta)\right|
\end{align}
where $f_{i,j}$ denotes the $j$-th neuron in the $i$-th layer, $\theta\backslash f_{i,j}$ represents deleting the neuron $f_{i,j}$ from the classifier $f(x;\theta)$ and $\boldsymbol{I}$ is the importance matrix.

To calculate each element in $\boldsymbol{I}$ with Eq.~\ref{equation:importance-metric}, an extra feedforward of $f(x;\theta\backslash f_{i,j})$ is needed, which needs a lot of extra computation overhead. To save computation time, the backward gradient is used to estimate the importance of all neurons through only the one-time feedforward of $f(x;\theta)$:
\begin{align}
\label{equation:estimated-importance-metric}
\boldsymbol{I}_{i,j} = \left|\frac{\partial L(x,y;\theta)}{\partial f_{i,j}(x;\theta)}\right|
\end{align}

After calculating the matrix $\boldsymbol{I}$, the neurons with the least importance are pruned according to the pruning rate $\gamma$ in each step update during adversarial example generation. Then, the update direction is calculated by the remaining DNN. Note that, only the last feature layer in DNN is pruned, e.g., the last layer of the fourth block for ResNet50~\cite{ResNet}. 

\subsubsection{Analysis of Network Pruning Method}
\label{sec:network-pruning-analysis}
\textit{Why the last feature layer is selected to prune?} Due to the classification on the last feature layer having the highest accuracy, the features of different categories in the last feature layer have little correlation. Hence, pruning redundant features in the last feature layer does not influent correct classification, so our attacks can focus on destroying the more important features.

In addition, our network pruning method can decrease the expressive power of DNN by reducing the size of parameters, thereby simplifying the classification boundary of DNN. Therefore, the update oscillation during generating adversarial examples can be further decreased for good convergence, thereby enhancing the transferability of generated adversarial examples.


\section{Experiments}
\label{sec:experiments}
To validate the effectiveness of our methods, the experiments are conducted on ImageNet~\cite{ImageNet} in this section. The experimental setup is firstly specified in Section~\ref{sec:experimental-setup}. Section~\ref{sec:comparison-direction-tuning} verifies the transferability of our direction tuning attacks on victim models with and without defenses, respectively. Section~\ref{sec:comparison-network-pruning} demonstrates that the network pruning method (NP) can further improve the transferability of adversarial attacks. Section~\ref{sec:combination-fia-and-input-transformation} represents that our methods can be well combined with other types of transfer-based attacks. Section~\ref{sec:sensitivity-analysis} makes a series of sensitivity analyses on parameter $K$ and $\mu_2$ of our direction tuning attacks, and parameter $\gamma$ of our network pruning method. To highlight the contribution of our methods, we compare them with the other comparative attacks under the same time complexity in Section~\ref{sec:the-same-of-time-complexity}.


\subsection{Experimental Setup}
\label{sec:experimental-setup}
\textbf{Datasets.} 1,000 clean images are randomly picked from ImageNet~\cite{ImageNet} validation set and correctly classified by all the testing models. Note that, due to ImageNet~\cite{ImageNet} is a large real dataset that includes the small datasets in real world, the validity of our method on ImageNet is sufficient to prove that our method can also achieve great performance on the small real-world datasets such as CIFAR~\cite{CIFAR}. Hence, similar to the advanced works~\cite{MI-FGSM,SI-NI-FGSM,VMI-FGSM}, ImageNet is used to demonstrate the effectiveness of their methods sufficiently.

\textbf{Models.} Six naturally trained models are used, which are VGG16/19~\cite{VGG}, ResNet50/152 (Res50/152)~\cite{ResNet}, MobileNet-v2 (Mob-v2)~\cite{MobileNet} and Inception-v3 (Inc-v3)~\cite{Inception-v3}. These are the pre-trained models in \cite{pytorch-image-models,torchvision-models} on the ImageNet dataset. Particularly, VGG16 and ResNet50 are selected as surrogate models. Additionally, eight advanced defense models are used to further evaluate the strong transferability of our attacks, including Resized and Padding (RP)~\cite{RP}, Bit Reduction (Bit-Red)~\cite{Bit-Red}, JPEG compression (JPEG)~\cite{JPEG}, Feature Distillation (FD)~\cite{FD}, Randomized Smoothing (RS)~\cite{RS}, Neural Representation Purifier (NRP)~\cite{NRP}, adversarial Inception-v3 (Inc-v3$^{adv}$) and ensemble adversarial Inception-ResNet-v2 (IncRes-v2$^{adv}_{ens}$)~\cite{Ensemble-Adversarial-Training}, wherein the first six defenses adopt MobileNet-v2 as the victim model.

\textbf{Baselines.} In the evaluations, five gradient-based attacks are considered as our baselines, including I-FGSM~\cite{I-FGSM}, MI-FGSM~\cite{MI-FGSM}, NI-FGSM~\cite{SI-NI-FGSM}, VM(N)I-FGSM~\cite{VMI-FGSM}. In addition, our method is also integrated with feature importance-aware attacks (i.e., FIA~\cite{FIA}), various input transformation methods (i.e., DIM~\cite{DI-FGSM}, SIM~\cite{SI-NI-FGSM} and TIM~\cite{TI-FGSM}), SGM~\cite{SGM} and More Bayesian (MB)~\cite{MoreBayesian} to evaluate the compatibility property.

\textbf{Metrics.} The average ASR on several victim models under the black-box setting (i.e. without any information of the victim models) is considered as the metric to indicate the  transferability of the attacks.

\textbf{Hyper-parameters.} The maximum perturbation, number of iterations, and step length are set as $\epsilon=16/255, T=10, \alpha= 1.6/255$ on ImageNet respectively, which is similar to MI-FGSM \cite{MI-FGSM}. For five gradient-based attacks, the decay factor is set to 1.0, i.e., $\mu_1=1.0$. For DIM, the transformation probability is set to 0.5. For TIM, the Gaussian kernel is selected and the size of the kernel is set to $15\times 15$. For SIM, the number of scale copies is set to 5. For variance tuning-based attacks, the factor $\beta$ for the upper bound of the neighborhood is set to 1.5, i.e., $\beta=1.5$ and the number of examples $N$ for variance tuning is set to 20, i.e., $N=20$. For FIA, the drop probability is set to 0.1, the ensemble number in aggregate gradient is set to 30, and the last layer of the third block for ResNet50 is chosen. For SGM, the decay factor is set to 0.2. For our direction tuning attacks, the parameters are set as $K=10, \mu_2=0.0$ on DTA and $K=10, \mu_2=0.8$ on VDTA, respectively. For the network pruning method, the pruning rate is set to 0.9, i.e., $\gamma=0.9$.

\begin{table}[t]
\caption{The ASR (\%) comparison on six models on ImageNet~\cite{ImageNet} dataset. The adversarial examples are crafted by the surrogate model, VGG16~\cite{VGG} or ResNet50~\cite{ResNet}. $*$ denotes the ASR under the white-box setting. \textbf{Average} means the average value except $*$.}
\label{tab:direction-tuning}
\scriptsize
\begin{center}
\resizebox{1.0\textwidth}{!}{
\begin{tabular}{ccccccccc}
\hline
Model                         & Attack  & VGG16          & VGG19         & Res50         & Res152        & Inc-v3         & Mob-v2        & \textbf{Average} \\ \hline
\multirow{7}{*}{VGG16}        & IFGSM   & 99.3*          & 96.6          & 39.3          & 27.5          & 23.2           & 55.6          & 48.4             \\
                              & MIFGSM  & 99.3*          & 98.0          & 65.9          & 49.7          & 49.5           & 77.9          & 68.2             \\
                              & NIFGSM  & \textbf{99.6*} & 98.6          & 69.4          & 53.0          & 50.3           & 82.9          & 70.8             \\
                              & DTA     & \textbf{99.6*} & \textbf{99.3} & \textbf{87.0} & \textbf{71.4} & \textbf{65.9}  & \textbf{90.7} & \textbf{82.9}    \\ \cline{2-9} 
                              & VMIFGSM & 99.8*          & 99.2          & 81.3          & 68.1          & 63.6           & 87.9          & 80.0             \\
                              & VNIFGSM & \textbf{99.8*} & 99.3          & 83.3          & 70.1          & 65.6           & 91.3          & 81.9             \\
                              & VDTA    & \textbf{99.8*} & \textbf{99.5} & \textbf{91.8} & \textbf{81.2} & \textbf{77.1}  & \textbf{94.9} & \textbf{88.9}    \\ \hline
\multirow{7}{*}{ResNet50}     & IFGSM   & 42.1           & 40.3          & \textbf{100*} & 61.6          & 27.0           & 46.6          & 43.5             \\
                              & MIFGSM  & 74.4           & 69.9          & \textbf{100*} & 81.2          & 54.0           & 73.1          & 70.5             \\
                              & NIFGSM  & 80.0           & 77.0          & \textbf{100*} & 87.0          & 55.8           & 78.7          & 75.7             \\
                              & DTA     & \textbf{90.0}  & \textbf{88.8} & \textbf{100*} & \textbf{97.2} & \textbf{72.1}  & \textbf{90.6} & \textbf{87.7}    \\ \cline{2-9} 
                              & VMIFGSM & 86.8           & 85.2          & \textbf{100*} & 92.1          & 71.5           & 87.0          & 84.5             \\
                              & VNIFGSM & 90.0           & 88.9          & \textbf{100*} & 95.6          & 74.7           & 90.1          & 87.9             \\
                              & VDTA    & \textbf{94.6}  & \textbf{93.5} & \textbf{100*} & \textbf{98.5} & \textbf{84.3}  & \textbf{95.7} & \textbf{93.3}    \\ \hline
\multirow{7}{*}{Inception-v3} & IFGSM   & 15.1           & 13.7          & 13.8          & 11.1          & 98.7*          & 19.3          & 14.6             \\
                              & MIFGSM  & 46.6           & 42.5          & 33.9          & 28.9          & 98.6*          & 47.4          & 39.9             \\
                              & NIFGSM  & 53.1           & 49.9          & 43.1          & 34.6          & 99.1*          & 56.4          & 47.4             \\
                              & DTA     & \textbf{60.2}  & \textbf{56.7} & \textbf{53.4} & \textbf{49.9} & \textbf{99.3*} & \textbf{62.9} & \textbf{56.6}    \\ \cline{2-9} 
                              & VMIFGSM & 59.3           & 56.3          & 52.7          & 48.7          & 99.2*          & 61.8          & 55.8             \\
                              & VNIFGSM & 65.1           & 62.1          & 60.1          & 55.6          & \textbf{99.7*} & 67.8          & 62.1             \\
                              & VDTA    & \textbf{70.8}  & \textbf{69.1} & \textbf{66.3} & \textbf{59.8} & \textbf{99.7*} & \textbf{73.3} & \textbf{67.9}    \\ \hline
\end{tabular}
}
\end{center}
\end{table}
\begin{table}[t]
\caption{The ASR (\%) comparison on eight advanced defense models on ImageNet~\cite{ImageNet} dataset under the black-box setting. The adversarial examples are generated by the surrogate model, VGG16~\cite{VGG} or ResNet50~\cite{ResNet}. \textbf{Avg.} denotes the average value.}
\label{tab:with-defenses}
\scriptsize
\begin{center}
\resizebox{1.0\textwidth}{!}{
\begin{tabular}{ccccccccccc}
\hline
Model                     & Attack  & RP            & \begin{tabular}[c]{@{}c@{}}Bit\\ -Red\end{tabular} & JPEG          & FD            & NRP           & RS            & \begin{tabular}[c]{@{}c@{}}Inc\\ -$v3^{adv}$\end{tabular}     & \begin{tabular}[c]{@{}c@{}}IncRes\\ -$v2^{adv}_{ens}$\end{tabular} & \textbf{Avg.} \\ \hline
\multirow{7}{*}{VGG16}        & IFGSM   & 46.5          & 41.9          & 37.5          & 38.8          & 30.2          & 34.5          & 19.3          & 9.0               & 32.2             \\
                              & MIFGSM  & 72.3          & 73.4          & 64.6          & 73.3          & 46.4          & 62.5          & 29.0          & 14.2              & 54.46            \\
                              & NIFGSM  & 75.7          & 79.3          & 68.9          & 77.7          & 47.5          & 68.5          & 27.4          & 14.2              & 57.4             \\
                              & DTA     & \textbf{86.6} & \textbf{88.9} & \textbf{82.0} & \textbf{88.6} & \textbf{48.4} & \textbf{75.5} & \textbf{30.6} & \textbf{16.3}     & \textbf{64.6}    \\ \cline{2-11} 
                              & VMIFGSM & 83.4          & 84.9          & 81.4          & 85.9          & 53.1          & 76.0          & 36.5          & 20.0              & 65.2             \\
                              & VNIFGSM & 85.6          & 89.1          & 83.5          & 87.5          & 54.1          & 78.5          & 36.6          & 20.3              & 66.9             \\
                              & VDTA    & \textbf{90.7} & \textbf{93.8} & \textbf{90.6} & \textbf{93.7} & \textbf{56.9} & \textbf{89.5} & \textbf{41.9} & \textbf{23.3}     & \textbf{72.6}    \\ \hline
\multirow{7}{*}{ResNet50}     & IFGSM   & 40.7          & 36.4          & 34.5          & 34.3          & 32.4          & 31.0          & 19.2          & 9.9               & 29.8             \\
                              & MIFGSM  & 68.2          & 69.3          & 63.9          & 70.2          & 49.7          & 59.5          & 29.6          & 16.1              & 53.3             \\
                              & NIFGSM  & 72.4          & 75.0          & 67.2          & 74.6          & 51.6          & 63.5          & 30.8          & 16.3              & 56.4             \\
                              & DTA     & \textbf{84.0} & \textbf{87.3} & \textbf{82.0} & \textbf{87.2} & \textbf{54.6} & \textbf{73.5} & \textbf{33.3} & \textbf{19.9}     & \textbf{65.2}    \\ \cline{2-11} 
                              & VMIFGSM & 81.9          & 84.7          & 80.5          & 85.0          & 59.7          & 77.0          & 40.4          & 28.4              & 67.2             \\
                              & VNIFGSM & 85.9          & 87.8          & 84.0          & 87.3          & 59.5          & 79.5          & 39.7          & 28.8              & 69.1             \\
                              & VDTA    & \textbf{92.4} & \textbf{94.7} & \textbf{91.8} & \textbf{94.5} & \textbf{64.5} & \textbf{91.0} & \textbf{47.2} & \textbf{33.8}     & \textbf{76.2}    \\ \hline
\multirow{7}{*}{Inception-v3} & IFGSM   & 22.0          & 16.5          & 18.9          & 18.0          & 25.9          & 24.5          & 19.4          & 8.9               & 19.3             \\
                              & MIFGSM  & 53.7          & 44.6          & 42.7          & 46.5          & 37.3          & 50.5          & 32.0          & 16.3              & 40.5             \\
                              & NIFGSM  & 59.4          & 53.4          & 48.7          & 54.1          & 37.6          & 53.5          & 32.8          & 16.2              & 44.5             \\
                              & DTA     & \textbf{66.6} & \textbf{61.8} & \textbf{59.4} & \textbf{62.6} & \textbf{39.2} & \textbf{57.5} & \textbf{38.0} & \textbf{20.5}     & \textbf{50.7}    \\ \cline{2-11} 
                              & VMIFGSM & 63.4          & 60.1          & 59.2          & 59.7          & \textbf{40.8} & 55.5          & 41.6          & 26.2              & 50.8             \\
                              & VNIFGSM & 70.5          & 65.6          & 62.5          & 66.2          & 40.0          & 61.0          & 43.5          & 27.4              & 54.6             \\
                              & VDTA    & \textbf{76.0} & \textbf{72.1} & \textbf{68.9} & \textbf{72.1} & 39.8          & \textbf{70.0} & \textbf{47.4} & \textbf{30.5}     & \textbf{59.6}    \\ \hline
\end{tabular}
}
\end{center}
\end{table}

\subsection{Transferability of our Direction Tuning Attacks}
\label{sec:comparison-direction-tuning}
In this subsection, VGG16, ResNet50 and Inception-v3 are considered as the surrogate models to generate adversarial examples for attacking six naturally trained victim models (as shown in the first row of Table~\ref{tab:direction-tuning}). Besides, the generated adversarial examples are also used to attack eight advanced defense models, including six advanced defenses~\cite{RP,Bit-Red,JPEG,RS,FD,NRP} and two adversarially trained models~\cite{Ensemble-Adversarial-Training} (as shown in the first row of Table~\ref{tab:with-defenses}).

We perform seven gradient-based attacks (i.e., I-FGSM~\cite{I-FGSM} and its variants, and the proposed DTA and VDTA) on the surrogate model to generate adversarial examples. The attack success rate (ASR), which is the misclassification rate of  the generated adversarial examples on the victim model, is shown in Tables~\ref{tab:direction-tuning} and \ref{tab:with-defenses}.

Table~\ref{tab:direction-tuning} shows that the ASRs of direction tuning attack (DTA) on all victim models are significantly higher than that of I-FGSM and its corresponding variants under the black-box setting. Additionally, the ASRs of direction tuning attack (DTA) on surrogate models are similar to that of the other comparative attacks under the white-box setting. Specifically, our DTA can improve the average ASR by 12.1\%, 12.0\% and 9.2\% with VGG16~\cite{VGG}, ResNet50~\cite{ResNet} and Inception-v3~\cite{Inception-v3} as surrogate models respectively in comparison with I-FGSM~\cite{I-FGSM} and its variants. When variance tuning~\cite{VMI-FGSM} is integrated into these adversarial attacks, the average ASR of our VDTA is 7.0\%, 5.4\% and 5.8\% higher than that of other attacks with VGG16~\cite{VGG}, ResNet50~\cite{ResNet} and Inception-v3~\cite{Inception-v3} as surrogate models, respectively.


In addition, Table~\ref{tab:with-defenses} shows that our VDTA achieves an average ASR of 76.2\%, surpassing the latest gradient-based attacks by a large margin of 7.1\%. Hence, the results in Table~\ref{tab:with-defenses} demonstrate that our methods can achieve great generalization to attack advanced defense models. Moreover, the increasing threat in the advanced defense models raises a new security issue for the development of more robust deep learning models. 


%
\begin{table}[t]
\caption{The ASR (\%) comparison on six models on ImageNet~\cite{ImageNet} dataset with or without network pruning method. The adversarial examples are generated by the surrogate model, ResNet50~\cite{ResNet}. * denotes the ASR under the white-box setting. \textbf{Average} denotes the average value except *.}
\label{tab:network-pruning}
\scriptsize
\begin{center}
\begin{tabular}{cccccccc}
\hline
Attack  & VGG16         & VGG19         & Res50         & Res152        & Inc-v3        & Mob-v2        & \textbf{Average} \\ \hline
DTA     & 90.0          & 88.8          & \textbf{100*} & 97.2          & \textbf{72.1} & 90.6          & 87.7             \\
DTA+NP  & \textbf{93.1} & \textbf{92.0} & \textbf{100*} & \textbf{97.8} & 70.5          & \textbf{93.1} & \textbf{89.3}    \\ \hline
VDTA    & 94.6          & 93.5          & \textbf{100*} & 98.5          & 84.3          & 95.7          & 93.3             \\
VDTA+NP & \textbf{96.1} & \textbf{94.5} & 99.9*         & \textbf{98.8} & \textbf{87.2} & \textbf{96.1} & \textbf{94.5}    \\ \hline
\end{tabular}
\end{center}
\end{table}

\subsection{The Effectiveness of Network Pruning Method}
\label{sec:comparison-network-pruning}
This subsection explores the impact of the network pruning method (NP) on direction tuning attack (DTA) and variance tuning-based direction tuning attack (VDTA). Table~\ref{tab:network-pruning} shows the ASRs of our DTA and VDTA on six victim models with and without the network pruning method. The results show that with the network pruning method, the average ASR can be further improved by 1.6\% on DTA and 1.2\% on VDTA under the black-box setting while keeping the ASR on the surrogate model, which demonstrates the effectiveness of the network pruning method on enhancing the transferability of adversarial examples and achieving great generalization on the proposed direction tuning attacks.

\begin{table}[t]
\caption{The ASR (\%) comparison on six models on ImageNet~\cite{ImageNet} dataset using the FIA loss instead of the cross-entropy loss. The adversarial examples are generated by the surrogate model, ResNet50~\cite{ResNet}. * denotes the ASR under the white-box setting. \textbf{Average} denotes the average value except *.}
\label{tab:fia}
\scriptsize
\begin{center}
\begin{tabular}{ccccccccc}
\hline
Attack  & Loss & VGG16         & VGG19         & Res50         & Res152        & Inc-v3        & Mob-v2        & \textbf{Average} \\ \hline
MIFGSM  & FIA  & 84.2          & 82.7          & \textbf{100*} & 87.6          & 64.6          & 83.4          & 80.5             \\
NIFGSM  & FIA  & 86.9          & 84.3          & \textbf{100*} & 89.1          & 66.4          & 85.0          & 82.3             \\
DTA     & FIA  & \textbf{93.4} & \textbf{92.8} & \textbf{100*} & \textbf{96.3} & \textbf{81.6} & \textbf{93.5} & \textbf{91.5}    \\ \hline
VMIFGSM & FIA  & 90.8          & 90.4          & \textbf{100*} & 93.5          & 78.3          & 90.9          & 88.8             \\
VNIFGSM & FIA  & 91.9          & 90.7          & \textbf{100*} & 94.8          & 79.4          & 91.4          & 89.6             \\
VDTA    & FIA  & \textbf{94.2} & \textbf{93.6} & \textbf{100*} & \textbf{96.4} & \textbf{83.3} & \textbf{94.2} & \textbf{92.3}    \\ \hline
\end{tabular}
\end{center}
\end{table}
\begin{table}[t]
\caption{The ASR (\%) comparison on six models on ImageNet~\cite{ImageNet}. The gradient-based attacks are enhanced by DIM~\cite{DI-FGSM}, SIM~\cite{SI-NI-FGSM} and TIM~\cite{TI-FGSM}, respectively. The adversarial examples are generated by the surrogate model, ResNet50~\cite{ResNet}. * denotes the ASR under the white-box setting. \textbf{Avg.} means the average value except *.}
\label{tab:input-transformation-methods}
\scriptsize
\begin{center}
\begin{tabular}{ccccccccc}
\hline
\begin{tabular}[c]{@{}c@{}}Input\\ transformations\end{tabular} & Attack  & VGG16         & VGG19         & Res50          & Res152        & Inc-v3        & Mob-v2        & \textbf{Avg.} \\ \hline
\multirow{8}{*}{DIM}                                            & MIFGSM  & 86.6          & 85.5          & \textbf{100*}  & 93.6          & 77.2          & 87.3          & 86.0             \\
                                                                & NIFGSM  & 89.5          & 88.2          & \textbf{100*}  & 94.5          & 76.6          & 90.0          & 87.8             \\
                                                                & DTA     & 97.6          & \textbf{97.7} & \textbf{100*}  & 99.5          & 92.8          & 98.1          & 97.1             \\
                                                                & DTA+NP  & \textbf{98.3} & \textbf{97.7} & \textbf{100*}  & \textbf{99.8} & \textbf{94.2} & \textbf{98.8} & \textbf{97.8}    \\ \cline{2-9} 
                                                                & VMIFGSM & 91.3          & 89.5          & \textbf{99.9*} & 95.4          & 84.0          & 92.4          & 90.5             \\
                                                                & VNIFGSM & 93.4          & 93.3          & \textbf{99.9*} & 96.2          & 85.0          & 94.4          & 92.5             \\
                                                                & VDTA    & 96.3          & 95.2          & \textbf{99.9*} & 97.4          & 90.7          & 96.3          & 95.2             \\
                                                                & VDTA+NP & \textbf{96.7} & \textbf{96.7} & 99.8*          & \textbf{98.5} & \textbf{92.4} & \textbf{97.7} & \textbf{96.4}    \\ \hline
\multirow{8}{*}{SIM}                                            & MIFGSM  & 87.5          & 86.8          & \textbf{100*}  & 93.5          & 74.0          & 89.5          & 86.3             \\
                                                                & NIFGSM  & 90.7          & 89.9          & \textbf{100*}  & 96.4          & 76.9          & 91.2          & 89.0             \\
                                                                & DTA     & 96.2          & 95.3          & \textbf{100*}  & 99.2          & 86.9          & 96.5          & 94.8             \\
                                                                & DTA+NP  & \textbf{97.0} & \textbf{95.6} & \textbf{100*}  & \textbf{99.5} & \textbf{88.4} & \textbf{98.3} & \textbf{95.8}    \\ \cline{2-9} 
                                                                & VMIFGSM & 93.6          & 93.4          & \textbf{100*}  & 98.2          & 86.9          & 95.1          & 93.4             \\
                                                                & VNIFGSM & 95.7          & 95.8          & \textbf{100*}  & 99.2 & 90.1          & 96.3          & 95.4             \\
                                                                & VDTA    & 97.4          & 97.1          & \textbf{100*}  & 99.2          & 92.8          & 98.0          & 96.9             \\
                                                                & VDTA+NP & \textbf{98.2} & \textbf{97.4} & \textbf{100*}  & \textbf{99.5} & \textbf{94.0} & \textbf{98.2} & \textbf{97.5}    \\ \hline
\multirow{8}{*}{TIM}                                            & MIFGSM  & 75.5          & 73.9          & 99.8*          & 74.3          & 58.1          & 72.8          & 70.9             \\
                                                                & NIFGSM  & 78.9          & 76.8          & 99.8*          & 78.5          & 59.8          & 77.9          & 74.4             \\
                                                                & DTA     & \textbf{88.6} & 88.7          & \textbf{100*}  & 93.2          & 74.2          & 89.7          & 86.9             \\
                                                                & DTA+NP  & \textbf{88.6} & \textbf{89.1} & \textbf{100*}  & \textbf{94.1} & \textbf{75.5} & \textbf{89.9} & \textbf{87.4}    \\ \cline{2-9} 
                                                                & VMIFGSM & 93.7          & 92.7          & 99.8*          & 96.1          & 85.7          & 94.0          & 92.4             \\
                                                                & VNIFGSM & 93.9          & 92.5          & 99.8*          & 96.7          & 85.6          & 95.2          & 92.8             \\
                                                                & VDTA    & 96.8          & \textbf{96.0} & \textbf{99.9*} & \textbf{98.8} & 91.5          & 97.3          & \textbf{96.1}    \\
                                                                & VDTA+NP & \textbf{97.2} & 95.6          & 99.6*          & 98.1          & \textbf{92.2} & \textbf{97.5} & \textbf{96.1}    \\ \hline
\end{tabular}
\end{center}
\end{table}
\begin{table}[t]
\caption{The ASR (\%) comparison on six models on ImageNet~\cite{ImageNet}. The gradient-based attacks are enhanced by SGM~\cite{SGM}, respectively. The adversarial examples are generated by the surrogate model, ResNet50~\cite{ResNet}. * denotes the ASR under the white-box setting. \textbf{Average} means the average value except *.}
\label{tab:sgm}
\scriptsize
\begin{center}
\begin{tabular}{cccccccc}
\hline
Attack      & VGG16         & VGG19         & Res50         & Res152        & Inc-v3        & Mob-v2        & \textbf{Average} \\ \hline
MIFGSM+SGM  & 92.6          & 89.7          & 99.9*          & 90.1          & 68.1          & 90.5          & 88.5             \\
NIFGSM+SGM  & 93.7          & 92            & 99.9*          & 93.4          & 70.8          & 92.6          & 90.4             \\
DTA+SGM     & 96.6          & 95.6          & \textbf{100*}  & 97.8          & 81.1          & 97            & 94.7             \\
DTA+NP+SGM  & \textbf{98}   & \textbf{97.4} & \textbf{100*}  & \textbf{98.1} & \textbf{82.3} & \textbf{97.8} & \textbf{95.6}    \\ \hline
VMIFGSM+SGM & 94.6          & 93.9          & \textbf{99.9*} & 93.4          & 77.4          & 93.6          & 92.1             \\
VNIFGSM+SGM & 95.9          & 94.7          & \textbf{99.9*} & 93.5          & 78.2          & 95.4          & 92.9             \\
VDTA+SGM    & \textbf{96.9} & \textbf{95.5} & \textbf{99.9*} & \textbf{95.8} & \textbf{85}   & \textbf{96.6} & \textbf{95.0}    \\
VDTA+NP+SGM & 95.1          & 94.0          & 98.5*          & 93.6          & 82.3          & 95.1          & 93.1             \\ \hline
\end{tabular}
\end{center}
\end{table}
\begin{table}[t]
\caption{The ASR (\%) comparison on six models on ImageNet~\cite{ImageNet}. The gradient-based attacks are enhanced by More Bayesian (MB)~\cite{MoreBayesian}, respectively. The adversarial examples are generated by the surrogate model, ResNet50~\cite{ResNet}. * denotes the ASR under the white-box setting. \textbf{Average} means the average value except *.}
\label{tab:mb}
\scriptsize
\begin{center}
\begin{tabular}{cccccccc}
\hline
Attack    & VGG16         & VGG19         & Res50        & Res152        & Inc-v3        & Mob-v2        & \textbf{Average} \\ \hline
MB        & 97.6          & 97.4          & \textbf{100} & 99.6          & 76.4          & 97.9          & 94.8             \\
NIFGSM+MB & 82.6          & 77.7          & \textbf{100} & 81.9          & 46.9          & 79.8          & 78.2             \\
DTA+MB    & \textbf{98.4} & \textbf{98.9} & \textbf{100} & \textbf{99.8} & \textbf{89.5} & \textbf{99.6} & \textbf{97.7}    \\ \hline
\end{tabular}
\end{center}
\end{table}

\subsection{Compatibility of our methods to Other Types of Attacks}
\label{sec:combination-fia-and-input-transformation}
In this subsection, we combine our methods with a feature importance-aware attack (i.e., FIA~\cite{FIA}), three input transformation methods (i.e., DIM~\cite{DI-FGSM}, TIM~\cite{TI-FGSM} and SIM~\cite{SI-NI-FGSM}), SGM~\cite{SGM} and More Bayesian (MB)~\cite{MoreBayesian} to verify the compatibility of our methods (including both direction tuning attack and network pruning method).

Table~\ref{tab:fia} shows the ASRs on six victim models when the gradient-based attacks are combined with the feature importance-aware attack (i.e., FIA~\cite{FIA}). The results show that our DTA and VDTA can assist FIA to achieve the greatest ASR in comparison with other gradient-based attacks (i.e., MI/NIFGSM and their variance tuning-based version). Besides, our DTA also spends less time consumption in comparison with VMI/VNIFGSM, in which the gradient of input in our method is computed 100 times (i.e., $K\cdot T=10\times 10$), while that in variance tuning-based gradient attacks is 200 times (i.e., $N\cdot T=20\times 10$).

Table~\ref{tab:input-transformation-methods} shows the ASRs on six victim models when our methods, including DTA, VDTA, and NP, are combined with three input transformation methods, respectively. The results show that direction tuning attacks are perfectly compatible with each input transformation method, and the network pruning method can further improve the transferability of adversarial examples. In particular, the combination of both DTA+NP and DIM~\cite{DI-FGSM} not only has the highest transferability but also has less time consumption in comparison with variance tuning-based gradient attacks. Note that, in this evaluation, VDTA computes the gradient of input 2000 times (i.e., $K\cdot N\cdot T=10\times 20\times 10$).

Table~\ref{tab:sgm} shows that our methods also have better compatibility with the skip gradient method~\cite{SGM} in comparison with the other gradient-based attacks. When compared with MI/NI-FGSM~\cite{MI-FGSM,SI-NI-FGSM}, our DTA can assist SGM to improve the average ASR from 90.4\% to 94.7\% and the network pruning method can assist SGM to further improve the transferability by 0.9\%. When compared with VMI/VNIFGSM~\cite{VMI-FGSM}, our VDTA can assist SGM to improve the average ASR from 92.9\% to 95.0\%, but the network pruning method can not further improve the transferability of our VDTA due to over-pruning. Additionally, the combination of DTA and network pruning method not only have better compatibility with SGM but also spend less time generating adversarial examples when compared with the variance tuning-based gradient attacks.

Table~\ref{tab:mb} shows that our DTA can further enhance the transferability of the latest model diversification method, i.e. More Bayesian (MB)~\cite{MoreBayesian}. To align with MB~\cite{MoreBayesian}, the parameters are set as $\epsilon=8/255$, $T=50$, $\alpha=1/255$ and the rescaling factor MB is set to $1.5$. The results demonstrate that our DTA can effectively enhance the transferability of More Bayesian~\cite{MoreBayesian} from 94.8\% to 97.7\%. Additionally, NIFGSM significantly decreases the transferability of More Bayesian~\cite{MoreBayesian} due to bad convergence.

\begin{figure}[ht]
\begin{center}
\centerline{\includegraphics[width=\textwidth]{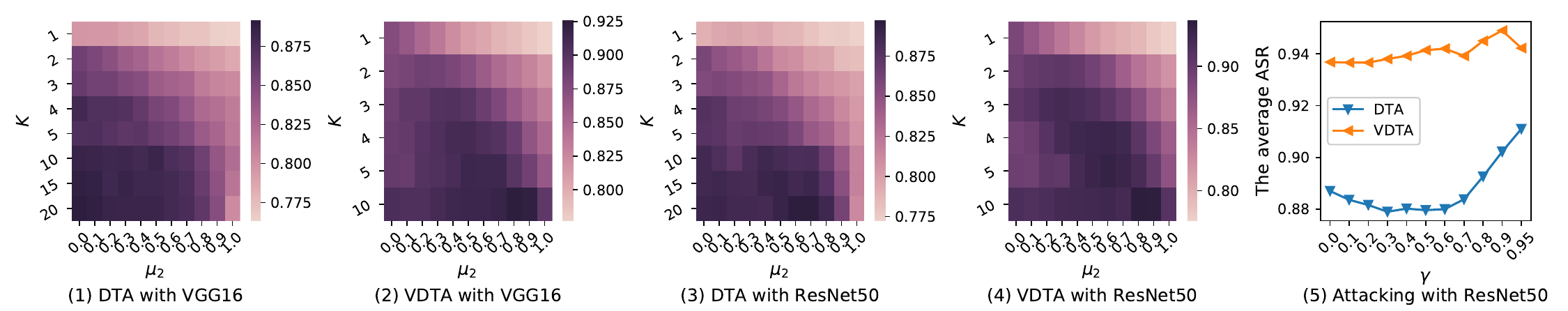}}
\caption{The average ASR (\%) on five models on ImageNet~\cite{ImageNet} dataset under the black-box setting when varying factor $K$ and $\mu_2$ in direction tuning attacks, and $\gamma$ in network pruning method. The adversarial examples are crafted with the surrogate model, VGG16 or ResNet50. Sub-figures (1) to (4) show the sensitivities of $K$ and $\mu_2$ in DTA or VDTA, and Sub-figure (5) represents that of $\gamma$ in network pruning method.}
\label{fig:ablation_k_decay2_gamma}
\end{center}
\end{figure}

\subsection{The Sensitivity Analysis}
\label{sec:sensitivity-analysis}
To analyze the sensitivity of the parameters $K$ and $\mu_2$ in direction tuning attacks, and $\gamma$ in network pruning method, we explore the impact of parameters $K$, $\mu_2$ and $\gamma$ on the average ASR in Fig.~\ref{fig:ablation_k_decay2_gamma} when $K\in[1,20]$, $\mu_2\in[0,1]$ and $\gamma\in$[0,1].

\textbf{The parameters $K$ and $\mu_2$ in direction tuning attacks.} As depicted in Fig.~\ref{fig:ablation_k_decay2_gamma}-(1) to (4), our DTA and VDTA attacks have higher transferability than NI-FGSM~\cite{SI-NI-FGSM} and VNI-FGSM~\cite{VMI-FGSM} in the most combination of $K$ and $\mu_2$. Besides, the transferability of our attacks becomes greater as the increase of $K$. When combining a moderate decay $\mu_2$, our attacks can achieve the best transferability in different sizes of decay $\mu_2$. Particularly, even if $K$ is determined as a small number (i.e., $K=2$), the transferability can be still significantly increased. Note that, when $K=1$ and $\mu_2=0$, DTA and VDTA will be degraded to NI-FGSM~\cite{SI-NI-FGSM} and VNI-FGSM~\cite{VMI-FGSM}, respectively. Due to the time complexity of our attacks increases as the parameter $K$ increases, a suitable $K$ is required to balance the transferability and computation cost. Eventually, we set the parameters as $K=10$ and $\mu_2=0.0$ on DTA, and $K=10$ and $\mu_2=0.8$ on VDTA, respectively.

\textbf{The parameter $\gamma$ in network pruning method.} We then investigate the influence of the parameter $\gamma$ on the transferability of direction tuning attacks where the parameters $K$ and $\mu_2$ are fixed as the optimal values. When $\gamma=0$, the network pruning method will be invalid. As depicted in Fig.~\ref{fig:ablation_k_decay2_gamma}-(5), when $\gamma\in [0,0.9]$, the average ASR gradually rises as $\gamma$ increases, which demonstrates the effectiveness of network pruning method on enhancing the transferability of our direction tuning-based attacks. Therefore, the parameter $\gamma$ of the network pruning method is set as 0.9 when the method is applied in our direction tuning-based attacks.

%
\begin{table}[t]
\caption{The average ASR (\%) comparison on five models on ImageNet~\cite{ImageNet} under the black-box setting when the time complexity of each gradient-based attacks is equal. The adversarial examples are generated by the surrogate model, VGG16 or ResNet50. Note that, to keep the same of the time complexity with DTA or VDTA, we increase the number of iterations of other attacks by $K$ times. TC denotes time complexity.}
\label{tab:imagenet-the-same-of-time-complexity}
\scriptsize
\begin{center}
\begin{tabular}{cccccc}
\hline
Model                                                                 & Attack   & TC                     & ASR  & TC                      & ASR           \\ \hline
\multirow{7}{*}{VGG16}        & MI-FGSM  & \multirow{3}{*}{$O(T)$}  & 68.2 & \multirow{3}{*}{$O(K\cdot T)$}  & 67.1          \\
                              & NI-FGSM  &                        & 70.8 &                         & 69.5          \\
                              & DTA      &                        & -    &                         & \textbf{82.9} \\ \cline{2-6} 
                              & PGD      & \multirow{4}{*}{$O(N\cdot T)$} & -    & \multirow{4}{*}{$O(K\cdot N\cdot T)$} & 45.9          \\
                              & VMI-FGSM &                        & 80.0 &                         & 86.3          \\
                              & VNI-FGSM &                        & 81.9 &                         & 85.8          \\
                              & VDTA     &                        & -    &                         & \textbf{88.9} \\ \hline
\multirow{7}{*}{ResNet50}     & MI-FGSM  & \multirow{3}{*}{$O(T)$}  & 70.5 & \multirow{3}{*}{$O(K\cdot T)$}  & 65.0          \\
                              & NI-FGSM  &                        & 75.7 &                         & 70.0          \\
                              & DTA      &                        & -    &                         & \textbf{87.7} \\ \cline{2-6} 
                              & PGD      & \multirow{4}{*}{$O(N\cdot T)$} & -    & \multirow{4}{*}{$O(K\cdot N\cdot T)$} & 27.4          \\
                              & VMI-FGSM &                        & 84.5 &                         & 89.5          \\
                              & VNI-FGSM &                        & 87.9 &                         & 90.2          \\
                              & VDTA     &                        & -    &                         & \textbf{93.3} \\ \hline
\multirow{7}{*}{Inception-v3} & MI-FGSM  & \multirow{3}{*}{$O(T)$}  & 39.9 & \multirow{3}{*}{$O(K\cdot T)$}  & 27.1          \\
                              & NI-FGSM  &                        & 47.4 &                         & 34.3          \\
                              & DTA      &                        & -    &                         & \textbf{56.6} \\ \cline{2-6} 
                              & PGD      & \multirow{4}{*}{$O(N\cdot T)$} & -    & \multirow{4}{*}{$O(K\cdot N\cdot T)$} & 9.0           \\
                              & VMI-FGSM &                        & 55.8 &                         & 56.1          \\
                              & VNI-FGSM &                        & 62.1 &                         & 59.9          \\
                              & VDTA     &                        & -    &                         & \textbf{67.9} \\ \hline
\end{tabular}
\end{center}
\end{table}

\subsection{Time Complexity Analysis}
\label{sec:the-same-of-time-complexity}
Direction tuning attacks, which add an inner loop in each step update, increase the time complexity in comparison with the current transferable attacks. For example, the time complexity of I/MI/NI-FGSM and VMI/VNI-FGSM is $O(T)$ and $O(N\cdot T)$, respectively, but that of DTA and VDTA is $O(K\cdot T)$ and $O(K\cdot N\cdot T)$. Due to the small size of the optimal parameter $K$ (i.e. $K=10$), DTA computes the gradient of input 100 times (i.e., $K\cdot T=10\times 10$) and VDTA computes 2000 times (i.e., $K\cdot N \cdot T=10\times 20\times 10$). Although the computation cost of VDTA is an order of magnitude higher than that of VMI/VNI-FGSM, it is acceptable like PGD-1000~\cite{Madry-AT}.


More importantly, our method can effectively improve the transferability of the gradient-based attacks at the same time complexity. To verify this, we increase the number of iterations of MI/NI-FGSM and VMI/VNI-FGSM by $K$ times. As shown in Table~\ref{tab:imagenet-the-same-of-time-complexity}, we found that i) the transferability of MI/NI-FGSM is decreased as the number of iterations increases. ii) Due to the stabilization property of variance tuning, in most cases, increasing iterations appropriately can improve the transferability of adversarial examples, e.g., VMI/VNI-FGSM with VGG16~\cite{VGG} and ResNet50~\cite{ResNet} as the surrogate models. iii) However, our VDTA still has significantly higher transferability under the same time complexity in comparison with other attacks under different networks~\cite{ResNet,VGG,Inception-v3} as the surrogate models.


\section{Conclusion}
\label{sec:conclusion}
In this paper, the direction tuning attacks and network pruning method are proposed to enhance the transferability of adversarial examples. Specifically, the direction tuning attack embeds the small sampling step length in each large update step to enhance the transferability of the generated adversarial examples by decreasing the angle between the actual update direction and the steepest update direction and reducing update oscillation. The network pruning method is proposed to smooth the classification boundary to further eliminate the update oscillation by stabilizing the update direction. Extended experiments on the ImageNet dataset verified that our methods can significantly improve the transferability of adversarial examples with and without defenses. In addition, the experimental results demonstrated our approach is compatible with other types of attacks well. Moreover, our method enables good transferability by reasonably increasing time consumption. Finally, we believe our study is beneficial to the adversarial defense community to design robust defense methods against attacks with strong transferability.

\appendix

\section{The Pseudocode of VDTA}
\label{appendix:pseudocode-vdta}
The complete execution pseudocode of VDTA is shown in Algorithm~\ref{alg:vdta}.


\bibliographystyle{unsrt} 
\bibliography{ref}

\begin{algorithm}[!b]
	\caption{VDTA}
	\label{alg:vdta}
	\begin{algorithmic}[1]
		\Require
		The natural example $x$ with its ground truth label $y$; the surrogate model $f$; the loss function $L$; The magnitude of perturbation $\epsilon$; the number of iteration $T$; the decay factor $\mu_1$; The factor $\beta$ for the upper bound of neighborhood and number of example $N$ for variance tuning;
		\Require
		The number of iteration $K$ and decay factor $\mu_2$ in the inner loop of our \textit{direction tuning}.
		\Ensure
		An adversarial example $x^{adv}$.
		\State $\alpha=\epsilon/T$
		\State $g_0=0; v_0=0; x^{adv}_0=x$
		\For {$t=0\rightarrow T-1$}
		\State $g_{t,0}=g_t;v_{t,0}=v_t;x^{adv}_{t,0}=x^{adv}_t$
		\For {$k=0\rightarrow K-1$}
		\State Get $x^{nes}_{t,k}$ by $x^{nes}_{t,k}=x^{adv}_{t,k}+\alpha\cdot\mu_1\cdot g_{t,k}$ (i.e., Eq.~\ref{equation:looking-ahead-dta})
		\State Get the gradient $g_{t,k+1}$ by Eq.~\ref{equation:vdta-inner-gradient} 
		\State Get the variance $v_{t,k+1}$ by Eq.~\ref{equation:vdta-variance-tuning}
		\State Update $x^{adv}_{t,k+1}$ by Eq.~\ref{equation:dta-update}
		\EndFor
		\State Update $v_{t+1}$ by $v_{t+1}=v_{t,1}$ \Comment{$v_{t,1}$ is equal to $v_{t+1}$ calculated by Eq.~\ref{equation:vmifgsm-variance}}
		\State Update $g_{t+1}$ by Eq.~\ref{equation:dta-gradient}
		\State Update $x^{adv}_{t+1}$ by Eq.~\ref{equation:mifgsm-update}
		\EndFor
		\State $x^{adv}=x^{adv}_T$\\
		\Return $x^{adv}$
	\end{algorithmic}
\end{algorithm}
%
\end{document}